\providecommand{\discreteLift}[1]{\Phi({#1})}
\providecommand{\R}{\mathbb{R}}
\providecommand{\SO}{\mathbf{SO}}
\providecommand{\SE}{\mathbf{SE}}
\providecommand{\grpG}{\mathbf{G}}
\providecommand{\gothg}{\mathfrak{g}}
\providecommand{\gothX}{\mathfrak{X}} 
\providecommand{\calM}{\mathcal{M}}
\providecommand{\calN}{\mathcal{N}}
\providecommand{\vecL}{\mathbb{L}}
\providecommand{\Sym}{\mathbb{S}} 
\providecommand{\tT}{\mathrm{T}} 
\providecommand{\GP}{\mathbf{N}} 
\providecommand{\tL}{\mathrm{L}} 
\providecommand{\tR}{\mathrm{R}} 
\DeclareMathOperator{\Ad}{Ad}
\DeclareMathOperator{\ad}{ad}
\providecommand{\id}{\mathrm{id}} 
\providecommand{\Lyap}{\mathcal{L}} 
\providecommand{\td}{\mathrm{d}}
\providecommand{\tD}{\mathrm{D}}
\providecommand{\tL}{\mathrm{L}}
\providecommand{\ddt}{\frac{\td}{\td t}}
\providecommand{\dt}{\td t}
\providecommand{\mr}[1]{\mathring{#1}} 
\providecommand{\scirc}{%
    \hbox{\fontfamily{\rmdefault}\fontsize{0.4\dimexpr(\f@size pt)}{0}\selectfont{\raisebox{-0.52ex}[0ex][-0.52ex]{$\circ$}}}}
\providecommand{\ucirc}{%
    \hbox{\fontfamily{\rmdefault}\fontsize{0.4\dimexpr(\f@size pt)}{0}\selectfont{\raisebox{0.0ex}[0ex][-0.52ex]{$\circ$}}}}
\mathchardef\mhyphen="2D
\providecommand{\etal}{\textit{et al.~}}
\DeclareFontFamily{OT1}{pzc}{}
\DeclareFontShape{OT1}{pzc}{m}{it}{<-> s * [1.10] pzcmi7t}{}
\DeclareMathAlphabet{\mathpzc}{OT1}{pzc}{m}{it}
\newtheorem{theorem}{Theorem}[section]
\newtheorem{lemma}[theorem]{Lemma}
\newtheorem{definition}[theorem]{Definition}
\newtheorem{remark}[theorem]{Remark}
\providecommand{\logG}{\log_\grpG}
\providecommand{\logGv}{\log^\vee_\grpG}
\providecommand{\expG}{\exp_\grpG}
\newcommand{\ddz}[1]{\left. \frac{\td}{\td #1} \right|_{#1=0}}
\title{\LARGE \bf The Difference between the Left and Right Invariant Extended Kalman Filter}
\author{
    \href{https://orcid.org/0000-0001-7969-7039}{\includegraphics[scale=0.06]{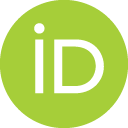}\hspace{1mm}
Yixiao Ge$^*$}
\\
    Systems Theory and Robotics Group \\
    School of Engineering \\
	Australian National University \\
    ACT, 2601, Australia \\
    \texttt{Yixiao.Ge@anu.edu.au} \\
\And    \href{https://orcid.org/0009-0007-3918-9196}{\includegraphics[scale=0.06]{orcid.png}\hspace{1mm}
Giulio Delama}
\\
    Control of Networked Systems Group \\
	University of Klagenfurt \\
    Klagenfurt, Austria \\
    \texttt{giulio.delama@aau.at} \\
\And    \href{https://orcid.org/0000-0002-3415-7378}{\includegraphics[scale=0.06]{orcid.png}\hspace{1mm}
Martin Scheiber}
\\
    Control of Networked Systems Group \\
	University of Klagenfurt \\
    Klagenfurt, Austria \\
    \texttt{martin.scheiber@aau.at} \\
\And    \href{https://orcid.org/0000-0002-1774-3236}{\includegraphics[scale=0.06]{orcid.png}\hspace{1mm}
Alessandro Fornasier}
\\
    Hexagon Robotics \\
	Zürich, Switzerland \\
    \texttt{alessandrofornasierphd@gmail.com} \\
\And    \href{https://orcid.org/0000-0003-4391-7014}{\includegraphics[scale=0.06]{orcid.png}\hspace{1mm}
Pieter van Goor}
\\
    Robotics and Mechatronics (RaM) Group \\
    EEMCS Faculty \\
	University of Twente \\
    Enschede, The Netherlands \\
    \texttt{p.c.h.vangoor@utwente.nl} \\
\And    \href{https://orcid.org/0000-0001-6906-5409}{\includegraphics[scale=0.06]{orcid.png}\hspace{1mm}
Stephan Weiss}
\\
    Control of Networked Systems Group \\
	University of Klagenfurt \\
    Klagenfurt, Austria \\
    \texttt{stephan.weiss@aau.at} \\
\And	\href{https://orcid.org/0000-0002-7803-2868}{\includegraphics[scale=0.06]{orcid.png}\hspace{1mm}
    Robert Mahony}
\\
    Systems Theory and Robotics Group \\
    School of Engineering \\
	Australian National University \\
    ACT, 2601, Australia \\
	\texttt{Robert.Mahony@anu.edu.au} \\
}
\begin{document}
\maketitle

\begin{abstract}
The extended Kalman filter (EKF) has been the industry standard for state estimation problems over the past sixty years.
The invariant extended Kalman filter (IEKF) \cite{barrauInvariantExtendedKalman2017} is a recent development of the EKF for the class of group-affine systems on Lie groups that has shown superior performance for inertial navigation problems. 
The IEKF comes in two versions, left- and right- handed respectively, and there is a perception in the robotics community that these filters are different and one should choose the handedness of the IEKF to match handedness of the measurement model for a given filtering problem.
In this paper, we revisit these algorithms and demonstrate that the left- and right- IEKF algorithms (with reset step) are identical, that is, the choice of the handedness does not affect the IEKF's performance when the reset step is properly implemented.
The reset step was not originally proposed as part of the IEKF, however, we provide simulations to show that the reset step improves asymptotic performance of all versions of the filter, and should be included in all high performance algorithms. 
The GNSS-aided inertial navigation system (INS) is used as a motivating example to demonstrate the equivalence of the two filters.
\end{abstract}

\section{Introduction}
\label{sec:introduction}

The extended Kalman filter (EKF) has been the industry standard nonlinear state estimation algorithm for the past sixty years \cite{maybeckStochasticModelsEstimation1982,barfootStateEstimationRobotics2024}.
The original formulation of the EKF was developed for systems evolving on global Euclidean spaces \cite{kalmanNewApproachLinear1960}, however, the first application of EKF, where it was applied to attitude estimation problem in the Apollo mission \cite{smithApplicationStatisticalFilter1962}, involved a system evolving on the special orthogonal group $\SO(3)$. 
The advent of uncrewed aerial vehicles (UAVs) led to increased research focus on the attitude filtering problem \cite{thienelCoupledNonlinearSpacecraft2003,mahonyNonlinearComplementaryFilters2008}.
This in turn led to a surge in the development of filtering algorithms for systems evolving on general Lie groups and homogeneous spaces.
Motivated by the attitude estimation problem and the more general question of building inertial navigation systems, Bonnabel \etal proposed the Invariant Extended Kalman Filter (IEKF), a general filtering methodology for systems on Lie groups, in a series of works \cite{bonnabelLeftinvariantExtendedKalman2007,bonnableInvariantExtendedKalman2009}.
In \cite{wangErrorPropagationEuclidean2006,long2013banana}, Chirikjian \etal showed that for left invariant kinematics on Lie groups, the error linearisation in the propagation stage is global, an important foundation for high performance filtering algorithms.
In \cite{barrauInvariantExtendedKalman2017}, Barrau and Bonnabel identified a class of `group affine' systems, for which they showed that the IEKF provides global error linearisation in the predict step.
Indeed, if the system admits a group-affine structure with left- (resp. right-) invariant inputs and right- (resp. left) equivariant outputs, then under a specific choice of covariance gains the convergence rate and basin of attraction of the IEKF becomes \emph{trajectory-independent} \cite{barrauInvariantKalmanFiltering2018}.
In a parallel research thread, Mahony \etal \cite{mahonyObserverDesignNonlinear2022,vangoorEquivariantFilterEqF2023,geEquivariantFilterDesign2022} proposed the equivariant filter (EqF), a general filter design methodology for systems evolving on homogeneous spaces, which specialises to the IEKF when the symmetry Lie groups and the state space are the same.

The IEKF is an error-state EKF that is derived by linearising the error dynamics, where the state estimation error can be defined using the \emph{left-} or \emph{right-invariant} error defined on a Lie group. 
The choice of the handedness of the error state leads to two different formulations of the IEKF, the left (L-IEKF) and the right (R-IEKF) filters.
There has been a long-standing argument in the literature that the choice of the handedness of the IEKF should be made to match the measurement model \cite{barrauInvariantExtendedKalman2017}. 
Left- versus right- observation models are also sometimes referred to as the \emph{spatial} or \emph{global} measurements versus \emph{ego} or \emph{local} measurements \cite{hartleyContactaidedInvariantExtended2020}. 
Typical spatial measurement models include global navigation satellite system (GNSS) measurements, while ego measurements are provided by body frame sensors such as inertial measurement units (IMUs), wheel odometers and cameras \cite{hanCovarianceSwitchBasedInvariant2024}.

In this paper, we use the concept of concentrated Gaussian distributions \cite{chirikjianStochasticModelsInformation2011,geGeometricPerspectiveFusing2024} to derive the left- and right- IEKFs with reset.
We go on to show that the left- and right- IEKF algorithms (including reset step) are stochastically equivalent; that is, they generate identical updates of the concentrated Gaussian parameters.
The conclusion is that the choice to implement an IEKF (including reset step) with either left- or right-invariant error is arbitrary.
Nevertheless, we go on to provide some analysis of the potential differences.
Firstly, discretising the continuous-time filter equations in the left- or right- representations introduces differences between the filters and choosing the representation for which the error dynamics are closest to linear will yield some advantage.
However, if the system function is expressed explicitly as a discrete propagation, then equivalence is recovered (\ref{app:discrete_equivalence}).
Secondly, choosing not to implement the reset step also breaks the equivalence.
We show empirically that the original IEKF with matched handedness appears to benefit slightly during the transient phase, at least for the INS problem considered.
However, we also demonstrate empirically that the reset step appears to always benefit the asymptotic phase of the filter in all algorithms, a result that aligns with prior work in the literature \cite{muellerCovarianceCorrectionStep2017}.
We have chosen to focus on the popular imperfect IEKF \cite{barrauInvariantExtendedKalman2017} for which the question of left- versus right- implementations is most contested in the literature.
However, the main results of the paper apply to all of the filters discussed in the recent paper \cite{fornasierEquivariantSymmetriesInertial2025}, including the IEKF \cite{barrauInvariantExtendedKalman2017} and the TFG-IEKF \cite{barrauGeometryNavigationProblems2023}.
Furthermore, since the filter is implemented on the lifted system on the Lie group, the results also apply to the more general Equivariant Filters \cite{vangoorEquivariantFilterEqF2023}.
This paper is complementary to the very recent parallel work by Maurer \etal \cite{maurerEquivalenceLeftRightInvariant2025}, which contains some closely related equivalence proofs.
In this paper, we go further in analysing discrete-time systems, considering equivariant outputs, and investigating the role of the reset step in the L-IEKF and R-IEKF.

The IEKF was originally introduced without the reset step,  while the main result of this paper depends on inclusion of the reset step. 
The authors stress that we are not taking a position on whether the reset step should be used in all applications or scenarios. 
Rather we wish to contribute to the present debate in the literature with theoretical analysis and some experimental studies. 
We believe there are scenarios where running the filter without reset is justified, and scenarios where there is significant benefit from the reset. 
Our only strong claim is that, if the reset is included, then there is no difference between a left- and right-invariant IEKF, caveat discretisation error.

\section{Preliminaries}
\label{sec:preliminary}

Let $\calM$ be a smooth manifold with dimension $m$.
The tangent space at a point $\xi\in\calM$ is denoted $\mathrm{T}_\xi\calM$.
The tangent bundle is denoted $\tT\calM$.
Given a differentiable function between smooth manifolds $h:\calM\rightarrow\calN$, its derivative at $\xi^\circ$ is written as
\begin{align*}
    \tD_\xi|_{\xi^\circ}h(\xi): \tT_{\xi^\circ}\calM\rightarrow \tT_{h(\xi^\circ)}\calN.
\end{align*}
The notation $\tD h:\tT\calM\rightarrow \tT\calN$ denotes the differential of $h$ with an implicit base point.

Let $\grpG$ be a general Lie group with dimension $n$, associated with the Lie algebra $\gothg$.
Let $\id$ denote the identity element of $\grpG$.
Given arbitrary $X,Y\in\grpG$, the left and right translations are denoted by $\textrm{L}_X, \textrm{R}_X : \grpG \to \grpG$, and are defined by
\[
    \textrm{L}_X(Y):=XY, \quad \textrm{R}_X(Y):=YX.
\]

The Lie algebra $\gothg$ is isomorphic to a vector space $\R^n$ with the same dimension.
We use wedge $(\cdot)^\wedge:\R^n\rightarrow\gothg$ and vee $(\cdot)^\vee:\gothg\rightarrow\R^n$ operators to map between the Lie algebra and the vector space.
The Adjoint map for the group $\grpG$, $\Ad_X:{\gothg}\to{\gothg}$ is defined by
\[
    \Ad_{X}[{{u}^{\wedge}}] = \tD \textrm{L}_{X} \circ\tD \textrm{R}_{X^{-1}}\left[{u}^{\wedge}\right] ,
\]
for every $X \in \grpG$ and ${{u}^{\wedge} \in \gothg}$, where $\tD \textrm{L}_{X}$, and $\tD \textrm{R}_{X}$ denote the differentials of the left and right translations, respectively.
The adjoint map for the Lie algebra $\ad_{{u}^\wedge}: {\gothg}\to{\gothg}$ is given by
\begin{equation*}
    \ad_{{u}^\wedge}{{v}^{\wedge}} = \left[{u}^{\wedge}, {v}^{\wedge}\right] ,
\end{equation*}
and is equivalent to the Lie bracket.
Given particular wedge and vee maps, the Adjoint matrix $\Ad_X^\vee  \in \R^{n\times n}$ and adjoint matrix $\ad_u^\vee  \in \R^{n\times n}$ are defined by
\begin{align*}
    \Ad_{X}^\vee u &= \left(\Ad_{X}{{u}^{\wedge}}\right)^{\vee}, \\
    \ad_u^\vee v &= \left(u^\wedge v^\wedge - v^\wedge u^\wedge\right)^\vee = \left[{u}^{\wedge}, {v}^{\wedge}\right]^\vee.
\end{align*}

Let $\expG:\gothg\to\grpG$ denote the exponential map from the Lie algebra to the group.
For matrix Lie groups such as $\SO(3),\SE(3)$, this map is simply the matrix exponential.
Let $\grpG'\subset\grpG$ be the subset of $\grpG$ where the exponential map is invertible, then one can define the logarithm map $\logG:\grpG'\to\gothg$ and $\logGv:\grpG'\to\R^n$.

\section{Problem Description}
\label{sec:problem_description}

Consider a system evolving on an $m$-dimensional Lie group $\grpG$.
The input space $\vecL$ is a $\ell$-dimensional linear space.
The continuous-time noise-free system dynamics are given by  
\begin{align}
    \dot{X} &= f_v(X) = X \Lambda(X, v), \label{eq:dynamicsCT}
\end{align}
where $X\in\grpG$ denotes the state of the system, and $f: \vecL \to \gothX(\grpG)$,
$v \mapsto f_v \in\gothX(\grpG)$ is the system function with left trivialisation $\Lambda : \grpG \times \vecL \to \gothg$ given by $\Lambda(X,v) := X^{-1} f_v(X)$.
For example, if the system has a spatial velocity $W \in \gothg$ such that
\[
\dot{X} = W X ,
\]
then the model that we use is 
\[
\dot{X} = X \Ad_{X^{-1}} W = X \Lambda(X,W). 
\]

\begin{remark}
    While the left-handed trivialisation is the standard representation in the literature, it is, of course, possible to redo all the theory in right-handed trivialisation.  
    However, since most robotics systems admit both spatial and body velocities, such as the INS problem discussed in Sec. \ref{sec:experiment}, there is no simplification in the formulas obtained in choosing one handedness over the other. 
\end{remark}

In this work, we assume that the noise process entering the system dynamics is left-invariant, that is, the noise process can be written as a stochastic differential equation
\begin{align}
    \td{X} &= X \left(\Lambda(X, v) \dt + \Upsilon[Q^\frac{1}{2}\td w]\right) \label{eq:dynamics_noise}
\end{align}
where $\td w$ is a Brownian motion in $\R^\ell$, $\Upsilon[\cdot]:\R^\ell \to \gothg$ is a constant linear operator, and $Q \in \Sym_+(\ell)$ is a positive definite matrix.
This model corresponds to the models considered recently in the literature \cite{yeUncertaintyPropagationUnimodular2024,maurerEquivalenceLeftRightInvariant2025}. 
Understanding the noise model in invariant filtering is still an active topic \cite{mahonyObserverDesignNonlinear2022}, however, the authors believe that sensible alternative choices of noise models will not change the equivalence proof, and the assumption made here is sufficiently general to cover most of the cases in practice.

The configuration output is a discrete measurement process given by the output function $h: \grpG\to\R^n$
\begin{align}
    y(t_k) &= h(X(t_k)) + \nu_k & \nu_k \sim \GP(0, R_k), \label{eq:measurement_generic}
\end{align}
at times $\{t_1, \ldots, t_k, \ldots\}$ and where $\nu_k$ is a zero-mean white noise with covariance $R_k \in \Sym_+(n)$.

Systems of this form given by \eqref{eq:dynamicsCT} and \eqref{eq:measurement_generic} are termed \emph{hybrid} systems and provide a good model for filter design for most robotic systems with fast input measurements and slow configuration measurements. 
We also consider fully discrete-time implementations of the IEKF in \ref{app:discrete_equivalence}.

\section{Left and Right IEKF Definition}
\label{sec:IEKF_definition}
In this section, we present a formal definition of the L-IEKF and R-IEKF with reset.
We start with the probability distributions associated with the two filters and derive the IEKFs in terms of the predict, update and reset steps for Kalman filters. 

\subsection{Concentrated Gaussian Distribution}
\label{sec:CGD}
The underlying goal of a stochastic filter is to estimate the probability density function of the system state given the measured inputs and outputs.
We use the concept of extended concentrated Gaussian distribution \cite{wangErrorPropagationEuclidean2006,geEquivariantFilterDesign2022,geGeometricPerspectiveFusing2024} to model the information state of the system.
Note that concentrated Gaussians depend on handedness, that is, they can be defined using left multiplication or right multiplication. 
The two representations can be shown to be equivalent through the $\Ad$ operation.

For an $m$-dimensional random variable $g\in\grpG$, the left-concentrated Gaussian distribution (L-CGD) is defined as 
\begin{align}\label{eq:L_CGD}
    p^\tL(g ; x, \mu, \Sigma) = \alpha \exp \left\{-\frac{1}{2}\lvert\logGv\left(x^{-1}g\right)-\mu^\vee\rvert_{ \Sigma^{-1}}\right\},
\end{align}
where $x\in\grpG$ is termed the \emph{reference point}, $\mu\in\gothg$ is termed the \emph{mean} and $\alpha\in\R$ is the normalising factor.
The \emph{covariance} $\Sigma\in\Sym_+(m)$ is an $m\times m$ symmetric and positive definite matrix.
This is equivalent to defining a random variable
\[
g = x\expG(\epsilon), \qquad \epsilon\sim \GP(\mu^\vee, \Sigma),
\]
where $\GP$ denotes the Gaussian process on $\R^m$.

Similarly, the right-concentrated Gaussian distribution (R-CGD) is defined as
\begin{align}\label{eq:R_CGD}
    p^\tR(g ; x, \mu, \Sigma) = \alpha \exp \left\{-\frac{1}{2}\lvert\logGv\left(gx^{-1}\right)-\mu^\vee\rvert_{\Sigma^{-1}}\right\},
\end{align}
which is equivalent to the random variable definition
\[  
g = \expG(\epsilon)x, \qquad \epsilon\sim \GP(\mu^\vee, \Sigma).
\]
For the rest of this paper, we will use $\GP^\tL_{x_\tL}(\mu_\tL, \Sigma_\tL)$ and $\GP^\tR_{x_\tR}(\mu_\tR, \Sigma_\tR)$ to denote the L-CGD and R-CGD, respectively.
The log-likelihood of a L-CGD is defined by
\begin{align*}
    \Lyap_\tL(\mu_\tL, x_\tL, \Sigma_\tL) := \frac{1}{2} \vert \log_\grpG(x_\tL^{-1} g) - \mu_\tL \vert_{\Sigma_\tL^{-1}}^2.
\end{align*}
where we treat the scaling factor as a cumulant offset and ignore it.
The log-likelihood of a R-CGD is 
\begin{align*}
    \Lyap_\tR(\mu_\tR, x_\tR, \Sigma_\tR) := \frac{1}{2} \vert \logG(g x_\tR^{-1}) - \mu_\tR \vert_{\Sigma_\tR^{-1}}^2.
\end{align*}

The information state of the L-IEKF with L-CGD for state representation is finite dimensional and is parameterised by a triple $(\mu_\tL, x_\tL, \Sigma_\tL) \in \gothg \times \grpG \times \Sym_+(m)$, representing the offset (or mean), the reference state (or group mean), and the covariance, respectively.
When the offset $\mu = 0$ then the reference is the group mean as understood by a classical concentrated Gaussian \cite{wangErrorPropagationEuclidean2006}. 
The R-IEKF state is similarly composed of $(\mu_\tR, x_\tR, \Sigma_\tR) \in \gothg \times \grpG \times \Sym_+(m)$. 

If $x_\tR = x_\tL$ then one has 
\begin{align*}
\expG(\varepsilon_\tR)x_\tR & = x_\tR x_\tR^{-1} \expG(\varepsilon_\tR)x_\tR \\
& = x_\tL \expG(\Ad_{x_\tR^{-1}} \varepsilon_\tR). 
\end{align*}
It follows that if $x_\tR = x_\tL$ and 
\begin{align*}
\varepsilon_\tL = \Ad_{x_\tR^{-1}} \varepsilon_\tR \sim \GP(\Ad_{x_\tR^{-1}}\mu_\tR, 
\Ad_{x_\tR^{-1}}^\vee \Sigma_\tR {\Ad_{x_\tR^{-1}}^\vee}^\top)
\end{align*}
then the information state captured by the two filters is identical. 
This property is used to prove the equivalence of the two filters in the following sections  (see also \cite{hanCovarianceSwitchBasedInvariant2024}).
 
The underlying integration measure associated with the definition of left- and right- concentrated Gaussians is implicit in the Euclidean coordinates placed on the Lie algebra. 
In the case where the Lie group is unimodular, it is possible to define a bi-invariant Haar measure directly on the group \cite{chirikjianStochasticModelsInformation2011}. 
Choosing a different measure will subtly alter the global structure of the pdf on the group, however, the local structure of the pdf will not be affected and in particular, the second-order statistics (mean and covariance) expressed in the algebra coordinates will be unchanged.
Filters that compute integrals of the pdf (e.g. the particle filter, etc) will be affected by the choice of measure, however, an EKF depends only the local second-order statistics expressed in the tangent space and is invariant to affine transformation of these coordinates making it measure independent.

\begin{remark}
By EKF algorithms, we mean algorithms where the filter steps are based on linearisation of the underlying system. 
Algorithms such as the well-known unscented Kalman filter (UKF) \cite{julierNewApproachFiltering1995}, or where integral properties of the distributions are used \cite{yeUncertaintyPropagationUnimodular2024}, are not true EKFs in this sense. 
The authors note that the UKF has been shown to have advantages in certain situations and there is active research into other more sophisticated algorithms that may lead to new filters that outperform the more classical IEKF. 
The focus of the paper is only on proving the equivalence of the left- and right- version of the IEKF. 
\end{remark}

\subsection{IEKF definition}
In this section, we define the IEKF dynamics for a hybrid system given by \eqref{eq:dynamicsCT} and \eqref{eq:measurement_generic}, that is, with continuous propagation and discrete measurement update.
The meaning of the parameters for a left or right filter implementation will be interpreted in terms of the left or right concentrated Gaussian distributions, respectively. 

\subsubsection{Notation}
An IEKF is often stated as an algorithm with states $(\hat{X}, \Sigma)$, however, in order to properly capture the reset step inherent in the EKF formulation, we use the extended concentrated Gaussian to model the filter state, given by $(\mu, \hat{X}, \Sigma)$.
Considering the reset step explicitly requires notation that captures the state of the filter through the three stages of the filter: \emph{predict}, \emph{update} and \emph{reset}. 
We will use the following notation for the state of the L-IEKF:
\small
\[\xymatrix@=0.5cm{
    (\mu_\tL(t_k), \hat{X}_\tL(t_k), \Sigma_\tL(t_k)) \ar[d]^{\text{predict}} \\ 
    (\mu_\tL^-(t_{k+1}), \hat{X}_\tL^-(t_{k+1}), \Sigma_\tL^-(t_{k+1})) \ar[d]^{\text{update}} \\
    (\mu_\tL^+(t_{k+1}), \hat{X}_\tL^+(t_{k+1}), \Sigma_\tL^+(t_{k+1})) \ar[d]^{\text{reset}}\\
    (\mu_\tL(t_{k+1}), \hat{X}_\tL(t_{k+1}), \Sigma_\tL(t_{k+1}))
}\]\normalsize
and use analogous notation for the R-IEKF.
Here, the full state of the filter at the start of one filter iteration is referred to without superscripts. 
We use a superscript `-' to indicate prediction without update.   
The `+' indicates that the new information available at time $t_{k+1}$ has been used to update the state, but expressed in the old coordinates. 
Finally, the reset step returns the full filter state in the new coordinates. 
Since the whole discussion occurs for filter step $t_{k+1}$, we will suppress the time index when it will not cause confusion to make the notation more concise.
We will use dedicated notation in different steps of the algorithm even when the values may be the same or known to improve comprehension. 
For example, both $\mu_\tL$ and $\mu_\tL^-$ are zero, however, we will preserve the full notation to make the steps in the algorithm clear. 

\subsubsection{Error definition}
The propagation step of the filter is associated with how the infinitesimal error in the neighbourhood of the true trajectory propagates.
For a classical error-state EKF, the error used is the Euclidean difference in local coordinates. 
In the case of systems on Lie-groups, there is an intrinsic definition of error, termed the \emph{left-} or \emph{right-invariant} errors given by
\begin{align*}
    E_{\tL} &:= \hat{X}^{-1} X,  &\text{and}&
    & E_{\tR} := X \hat{X}^{-1}.
\end{align*}
respectively.
Note that it is also possible to study the inverses of these errors, namely $E_{\tL}^{-1} = X^{-1} \hat{X}$ and $E_{\tR}^{-1} = \hat{X} X^{-1}$, but it is well established that inverting the error does not change the filter.

Let $v_m$ denote the measured velocity, the error dynamics are given by
\begin{align*}
    \ddt E_{\tL} 
    & = -\Lambda(\hat{X}_\tL, v_m) \hat{X}_\tL^{-1} X + \hat{X}_\tL^{-1}X  \Lambda(X, v) \\
    &= -\Lambda(\hat{X}_\tL, v_m) E_\tL + E_\tL \Lambda(X, v),
    \\ 
    \ddt E_{\tR} 
    & = X \left( \Lambda(X, v) - \Lambda(\hat{X}_\tR, v_m) \right) \hat{X}_\tR^{-1} \\
    &= E_\tR \Ad_{\hat{X}_\tR} \left( \Lambda(X, v) - \Lambda(\hat{X}_\tR, v_m) \right). 
\end{align*}
Recalling \eqref{eq:dynamics_noise}, then for the case of left-invariant error one has 
\small
\begin{align}
\td E_{\tL} 
& = -\Lambda(\hat{X}_\tL, v_m) E_\tL \dt + E_\tL \Lambda(X, v)\dt \notag \\ 
& = 
- \left(  \Lambda(\hat{X},v)\dt + \Upsilon [Q^{\frac{1}{2}} \td w ] \right)E_\tL + E_\tL \Lambda(X, v)\dt \notag \\  
& = 
\left( - \Lambda(\hat{X},v) E_\tL  + E_\tL \Lambda(X, v) \right) \dt 
-\Upsilon [Q^{\frac{1}{2}} \td w ] E_\tL \notag \\
& = 
\left( - \Lambda(\hat{X},v) E_\tL  + E_\tL \Lambda(X, v) \right) \dt 
-\tD \tR_{E_\tL} \Upsilon [Q^{\frac{1}{2}}\td w]. 
\label{eq:left_handed_error_linearisation}
\end{align}
\normalsize
For the case of right-handed error one has \small
\begin{align}
\td E_{\tR} 
& = E_\tR \Ad_{\hat{X}_\tR} \left( \Lambda(X, v) - \Lambda(\hat{X},v_m) \right) \dt \notag\\
 & = E_\tR \Ad_{\hat{X}_\tR} \left( \Lambda(X, v) \dt - \Lambda(\hat{X},v)\dt - \Upsilon [Q^{\frac{1}{2}} \td w ]   \right) \notag\\
&  \begin{multlined}[b]=E_\tR \Ad_{\hat{X}_\tR} \left( \Lambda(X, v)  - \Lambda(\hat{X},v)  \right) \dt \\
 \qquad\qquad - \tD \tL_{E_\tR} \Ad_{\hat{X}_\tR} \Upsilon [Q^{\frac{1}{2}} \td w ].  \end{multlined} 
\label{eq:right_handed_error_linearisation}
\end{align}\normalsize
Note that in both cases the noise enters linearly in the $\Upsilon$ term. 
In the case of the left-handed error coordinates there is a simple $\tD \tR_{E_\tL}$ that is suppressed in the linearisation \eqref{eq:BL}. 
Conversely, in the right-handed error term there is an $\Ad_{\hat{X}_\tR} \Upsilon$ term through which the noise enters \eqref{eq:BR}. 
While the noise appears to be "transformed" by the adjoint, it is still linear in the underlying noise process as $\Ad_{\hat{X}_\tR}$ is a linear operator on the Lie algebra. 

\subsubsection{Predict}
Consider the system dynamics \eqref{eq:dynamicsCT}, the L-IEKF and R-IEKF dynamics in the predict step are obtained through propagating the reference state by the continuous-time system model, and propagating the covariance by linearising the error dynamics \eqref{eq:left_handed_error_linearisation} and \eqref{eq:right_handed_error_linearisation} of $E_\tL$ and $E_\tR$, respectively.

\begin{definition}[L-IEKF prediction dynamics]\label{def:left_prediction}
    Let   $(\mu_\tL,\allowbreak \hat{X}_\tL, \allowbreak\Sigma_\tL)$ denote the L-IEKF state with $\mu_\tL = 0$ at time $t_k$. 
    Note that the subscript $k$ has been dropped.
    The prediction is given by integrating the dynamics \eqref{eq:dynamicsCT} over the interval $[t_k, t_{k+1}]$:
    \begin{align*}
        \dot{\mu}_\tL &= 0, & \mu_\tL(t_k) = 0 \\
        \dot{\hat{X}}_\tL &= f_{v_m}(\hat{X}_\tL), & \hat{X}_\tL(t_k) = \hat{X}_\tL\\
        \dot{\Sigma}_\tL &= A_\tL \Sigma_\tL + \Sigma_\tL A_\tL^\top + B_\tL Q  B_\tL^\top & \Sigma_\tL(t_k) = \Sigma_\tL
    \end{align*}
    where $A_{\tL}$ and $B_{\tL}$ are the linearisation of the system \eqref{eq:dynamicsCT}
    \begin{align}
        A_\tL &:= \tD_X|_{\hat{X}_\tL} \Lambda(X, v_m) \cdot \tD \tL_{\hat{X}_\tL}(I) - \ad^\vee_{\Lambda(\hat{X}_\tL, v_m)}, \label{eq:AL} \\
        B_\tL &:= \tD_{u}|_{v_m} \Lambda(\hat{X}_\tL, u) = \Upsilon.\label{eq:BL}
    \end{align}
The output is 
    \begin{align*}
        \mu_\tL^- &= 0, \\
        \hat{X}_\tL^- &= \hat{X}_\tL(t_{k+1}), \\
        \Sigma_\tL^- &= \Sigma_\tL(t_{k+1} ).
        \end{align*}
\end{definition}

\begin{definition}[R-IEKF prediction dynamics]\label{def:right_prediction}
    Let $(\mu_\tR,\allowbreak \hat{X}_\tR, \Sigma_\tR)$ denote the R-IEKF state with $\mu_\tR = 0$ at time $t_k$.  Note that the subscript $k$ has been dropped. 
    The prediction is given by integrating the dynamics \eqref{eq:dynamicsCT} over the interval $[t_k, t_{k+1}]$:
    \begin{align*}
        \dot{\mu}_\tR &= 0, & \mu_\tR(t_k) = 0\\
        \dot{\hat{X}}_\tR &= f_{v_m}(\hat{X}_\tR),  & \hat{X}_\tR(t_k) = \hat{X}_\tR\\
        \dot{\Sigma}_\tR &= A_\tR \Sigma_\tR + \Sigma_\tR A_\tR^\top + B_\tR Q B_\tR^\top  & \Sigma_\tL(t_k) = \Sigma_\tL 
    \end{align*}
    where $A_\tR, B_\tR$ are the linearisation matrices, defined by
    \begin{align}
        A_\tR &= \Ad_{\hat{X}_\tR} \tD_X|_{\hat{X}_\tR} \Lambda(X, v_m)\cdot \tD \tR_{\hat{X}_\tR}(I), \label{eq:AR}\\
        B_\tR &= \Ad_{\hat{X}_\tR} \tD_{u}|_{v_m} \Lambda(\hat{X}_\tR, u) 
        = \Ad_{\hat{X}_\tR}^\vee \Upsilon \label{eq:BR}
    \end{align}
Note the $\Ad_{\hat{X}_\tR}$ term that naturally appears in the $B_\tR$ matrix associated with the change of handedness in the representation.
In particular, the $B_\tR$ term is not constant along the trajectory. 
The output is 
    \begin{align*}
        \mu_\tR^- &= 0, \\
        \hat{X}_\tR^- &= \hat{X}_\tR(t_{k+1}), \\
        \Sigma_\tR^- &= \Sigma_\tR(t_{k+1}).
        \end{align*}
\end{definition}

With the initial condition represented by $\GP_{\hat{X}(t_k)}(0,\allowbreak \Sigma(t_k))$ at time $t_k$, the prediction step consists of integrating the ODEs given in Definitions \ref{def:left_prediction} and \ref{def:right_prediction} until $t_{k+1}$ when the new measurement comes in.
At the end of the propagation step, the estimated state is represented by $\GP_{\hat{X}^-(t_{k+1})}(0,\allowbreak \Sigma^-(t_{k+1}))$. \\

\subsubsection{General Measurement Update}\label{sub:general_update}
The aim of this step is to update the filter estimate by incorporating the new measurement $y_{k+1}$.
We first consider the measurement update for the generic measurement model \eqref{eq:measurement_generic}.
Note that the subscript step $k+1$ has been suppressed in all states and replaced by the filter handedness in all equations. 
We also suppress the index $y = y_{k+1}$ on the measurement to respect the convention, although the same measurement is used for both handedness. 

\begin{definition}[L-IEKF measurement update]
    Let $(\mu_\tL^-,\allowbreak \hat{X}_\tL^-,\Sigma_\tL^-)$ denote the L-IEKF state with $\mu_\tL^-=0$ and let $y$ be a measurement as described in \eqref{eq:measurement_generic}.
    The posterior L-IEKF state $(\mu_\tL^+, \hat{X}_\tL^-, \Sigma_\tL^+)$ is obtained through the \emph{update step} by
    \begin{align*}
        S_\tL &:= C_\tL \Sigma_\tL^- C_\tL^\top + R, \\
        K_\tL &:= \Sigma_\tL^- C_\tL^\top S_\tL^{-1}, \\
        \mu_\tL^+ &= K_\tL (y - h(\hat{X}_\tL^-)), \\
        \Sigma_\tL^+ &= (I - K_\tL C_\tL) \Sigma_\tL^-,
    \end{align*}
    where $C_\tL$ is the linearisation matrix defined by
    \begin{align}\label{eq:CL}
        C_\tL &:= \tD_X|_{\hat{X}_\tL^-} h(X)\cdot \tD \tL_{\hat{X}_\tL^-}(I).
    \end{align}
\end{definition}

\begin{definition}[R-IEKF measurement update]
    Let $(\mu_\tR^-,\allowbreak \hat{X}_\tR^-, \Sigma_\tR^-)$ denote the R-IEKF state with $\mu_\tR^-=0$ and let $y$ be a measurement as described in \eqref{eq:measurement_generic}.
    The posterior R-IEKF state $(\mu_\tR^+, \hat{X}_\tR^-, \Sigma_\tR^+)$ is obtained through the \emph{update step} by
    \begin{align*}
        S_\tR &:= C_\tR \Sigma_\tR^- C_\tR^\top + R, \\
        K_\tR &:= \Sigma_\tR^- C_\tR^\top S_\tR^{-1}, \\
        \mu_\tR^+ &= K_\tR (y - h(\hat{X}_\tR^-)), \\
        \Sigma_\tR^+ &= (I - K_\tR C_\tR) \Sigma_\tR^-,
    \end{align*}
    where $C_\tR$ is the linearisation matrix defined by
    \begin{align}\label{eq:CR}
        C_\tR &:= \tD_X|_{\hat{X}_\tR^-} h(X)\cdot \tD \tR_{\hat{X}_\tR^-}(I).
    \end{align}
\end{definition}

The two update steps are identical apart from notation and the $\tD \tL_{\hat{X}_\tL}$ or $\tD \tR_{\hat{X}_\tR}$ terms in the $C$ matrix. 

In the update step, the measurement $y$ and noise process $\nu \sim \GP(0, R)$ are used to estimate a fused density $\GP_{\hat{X}^-(t_{k+1})}(\mu^+(t_{k+1}), \Sigma^+(t_{k+1}))$. 
Note that the update step involves a Bayesian fusion in local exponential coordinates, which leads to a non-zero offset $\mu^+(t_{k+1})$ and does not change the reference $\hat{X}^-(t_{k+1})$. 

One could define $X_\tL^+ = X_\tL^-$ (and $X_\tR^+ = X_\tR^-$) in order to make the state notation consistent. 
However, it is better to use the `$-$' notation for the reference in order to reinforce that the reference exponential coordinates do not change in the update step. \\

\subsubsection{Reset}\label{sub:Reset}
The reset step transforms the posterior $\GP_{\hat{X}^-(t_{k+1})}  (\mu^+\allowbreak(t_{k+1}), \Sigma^+(t_{k+1}))$ into a new distribution with zero offset $\GP_{\hat{X}(t_{k+1})}(0, \Sigma(t_{k+1}))$ at the tangent coordinates at the new reference state.
Note that in much of the EKF literature the reset step is simply a reset of the reference 
\[
\hat{X} (t_{k+1}) = \hat{X}^- (t_{k+1}) \expG(\mu^+(t_{k+1}))
\]
without any corresponding consideration of the consequences of the change of coordinates on the covariance $\Sigma^+(t_{k+1})$. 
Theoretical arguments from both geometric \cite{geGeometricPerspectiveFusing2024} and analytic \cite{muellerCovarianceCorrectionStep2017} perspectives have conclusively shown that the change in the reference state induces a change in the covariance $\Sigma$ when the state space is not globally Euclidean. 
Conceptually, changing the reference point of a concentrated Gaussian changes the coordinates in which the pdf is expressed and should change the algebraic form of the covariance. 
The authors claim that failing to model the reset correctly is a major source that leads to differences between the L-IEKF and R-IEKF in the literature, which is supported by both analytical derivations and experimental results presented in this paper.

\begin{remark}
The reset step was initially introduced as part of the MEKF framework \cite{markleyAttitudeErrorRepresentations2003} which resets the reference point of the state estimate.
Although the original work claimed that the reset step should not affect the covariance as it does not add any new information, there have been subsequent studies on attitude estimation \cite{zanettiQuaternionEstimationNorm2006, reynoldsAsymptoticallyOptimalAttitude2008} that suggest otherwise. 
This was later confirmed by \cite{muellerCovarianceCorrectionStep2017, gillFullOrderSolutionAttitude2020, markleyErrorCovarianceResetMultiplicative2023} which demonstrated that the reset changes the reference frame for the attitude error covariance and show the reset step improves the overall filtering performance.
In a separate but closely related stream of work \cite{bourmaudDiscreteExtendedKalman2013,bourmaudContinuousDiscreteExtendedKalman2015}, the authors presented a general Lie-group EKF framework that incorporates the reset step to account for the change of coordinates in which the covariance is expressed in the EKF update. 
In \cite{mahonyObserverDesignNonlinear2022,geEquivariantFilterDesign2022}, the reset step was introduced to the equivariant filter framework as a curvature correction term based on the parallel transport operator.
Later in \cite{geGeometricPerspectiveFusing2024}, the authors proposed using the differential of the exponential map for the reset step in the context of fusing Gaussian distributions on Lie groups.
A similar formula can be derived by treating the EKF update as a maximum likelihood optimisation and computing the new covariance as the Hessian of the log-likelihood \cite{bonnabelIntrinsicCramerRaoBound2015a, geGeometryExtendedKalman2025}. 
\end{remark}

The reset step is defined by
\begin{definition}[L-IEKF reset]
    Let $(\mu_\tL^+, \hat{X}_\tL^-, \Sigma_\tL^+)$ denote the L-IEKF state.
    Then the \emph{reset step} is defined by
    \begin{align*}
        \mu_\tL &= 0, \\
        \hat{X}_\tL &= \hat{X}_\tL^- \expG(\mu_\tL^+), \\
        \Sigma_\tL &= J_\tL \Sigma_\tL^+ J_\tL^\top,
    \end{align*}
    where $J_\tL$ is the Jacobian matrix defined by
    \begin{align}
        J_\tL &:= \tD \tL_{\expG(\mu_\tL^+)}^{-1}\cdot \tD_\Delta|_{\mu_\tL^+}\expG(\Delta). \label{eq:JL}
    \end{align}
\end{definition}

\begin{definition}[R-IEKF reset]
    Let $(\mu_\tR^+, \hat{X}_\tR^-, \Sigma_\tR^+)$ denote the R-IEKF state.
    Then the \emph{reset step} is defined by
    \begin{align*}
        \mu_\tR &= 0, \\
        \hat{X}_\tR &= \expG(\mu_\tR^+) \hat{X}_\tR^-, \\
        \Sigma_\tR &= J_\tR \Sigma_\tR^+ J_\tR^\top,
    \end{align*}
    where $J_\tR$ is the Jacobian matrix defined by
    \begin{align}
        J_\tR &:= \tD \tR_{\expG(\mu_\tR^+)}^{-1}\cdot \tD_\Delta|_{\mu_\tR^+}\expG(\Delta). \label{eq:JR}
    \end{align}
\end{definition}

\begin{remark}
    The Jacobian matrices $J_\tL$ and $J_\tR$ are commonly referred to as the \emph{right} and \emph{left} Jacobians in most literature, respectively \cite{chirikjianStochasticModelsInformation2011}.
    We use the notation $J_\tL$ and $J_\tR$ to align with the handedness of the filter, as well as the fact that these Jacobians are derived through the left and right trivialisation of the group, respectively.
\end{remark}

The reset step transforms the posterior, which is a concentrated Gaussian with non-zero offset, into a zero-offset distribution by choosing a new reference state.
The distribution after reset is represented with $\GP_{\hat{X}(t_{k+1})}(0, \Sigma(t_{k+1}))$ which concludes a complete filter iteration. 

\begin{remark}
Note that the reset step was not included in the original IEKF framework \cite{barrauInvariantExtendedKalman2017}. 
The original IEKF formulation without reset admits a number of theoretical properties including stability \cite{barrauInvariantExtendedKalman2017}, preservation of equality constraints  \cite{barrauExtendedKalmanFiltering2020}, and consistency in the presence of unobservability due to invariance \cite{brossardExploitingSymmetriesDesign2019}. 
The IEKF with reset inherits some of these properties. 
The reset step preserves the filter's local asymptotic stability since the effect of the Jacobian can be locally bounded and absorbed into the proof arguments in \cite{barrauInvariantExtendedKalman2017}. 
Respect of precise information is also preserved since the equality constraint imposes a closure condition on the Lie subalgebra and this preserves the covariance update to the equality constraint tangent space. 
The authors do not believe that IEKF with reset preserves consistency in the presence of unobservability due to invariance. 
In this case, the Jacobian mixes information from observable directions into the unobservable space and results in growth of Fisher information in the unobservable directions. 
Furthermore, we undertake an ablation study in \S\ref{sec:experiment} that shows that the IEKF without reset appears to have better transient response in the INS filtering problem, a property that may be associated with the reset breaking the global structure of Bayesian fusion of Gaussians on the linear position and velocity spaces. 
In conclusion, there are arguments for considering the IEKF with reset and arguments for considering the IEKF without reset for different scenarios and applications. 
\end{remark}

\subsection{Left-invariant measurement model and update}
\label{sub:invariant_measurement}
In Sec.~\ref{sub:general_update}, we considered the general measurement model \eqref{eq:measurement_generic} and the corresponding update step.
However, there are certain types of measurements that draw attention due to their nice linearisation properties, namely \emph{left-invariant} and \emph{right-invariant} measurements \cite{barrauInvariantExtendedKalman2017}.
The common perception is that such measurements can lead to a state-independent linearisation matrix when being used with the IEKF with the corresponding handedness, which improves the filter performance.

In this section, we will illustrate the left-invariant measurement model and the corresponding update step for the L-IEKF and R-IEKF.
The right-invariant measurement and the corresponding update step follow similarly.

\begin{definition}[Left-invariant measurement]\label{def:left_invariant_measurement}
A measurement $y \in \R^n$ is termed left-invariant if, for some $\mr{y}\in \R^n$, it satisfies the following property:
\begin{align*}
    y = h(X) + \eta = \rho(X, \mr{y}) + \eta,\qquad\eta \sim \GP(0, R)
\end{align*}
where $\rho : \grpG \times \R^n \to \R^n$ is a left group action, that is, 
\begin{align*}
    \rho(X_1 X_2, y) &= \rho(X_1, \rho(X_2, y)), &
    \rho(I, y) &= y,
\end{align*}
for all $X_1, X_2 \in \grpG$ and all $y \in \R^n$.
\end{definition}

Given a left-invariant measurement $y = \rho(X, \mr{y}) + \eta$, let $X' \in \grpG$ and observe that 
\[
\rho(X',y) = \rho(X',\rho(X, \mr{y}) + \eta) = \rho(X' X, \mr{y}) + \rho(X',\eta)
\]
where we exploit the linearity of the group action to factor out the noise process. 
This is a form of equivariance of the output \cite{vangoorEquivariantFilterEqF2023}. 
Note, however, that this operation transforms the noise.

In most literature \cite{barrauInvariantExtendedKalman2017}, the perception is that the L-IEKF should be preferred when the measurement model is left-invariant, and vice versa for the R-IEKF.
Left-invariant measurements have been exploited in the L-IEKF formulation as follows.
Let $(0, \hat{X}_\tL, \Sigma_\tL)$ denote the L-IEKF state.
Then for a measurement $y = h(X) + \eta$, one may define the pseudo measurement $d = \rho(\hat{X}_\tL^{-1}, y)$ and find that
\begin{align*}
    d &= \rho(\hat{X}_\tL^{-1}, y) = \rho(\hat{X}_\tL^{-1}, h(X)+\eta) \\
    &= \rho(\hat{X}_\tL^{-1}, \rho(X, \mr{y})+\eta) \\
    &= \rho(\hat{X}_\tL^{-1}, \rho( \hat{X}_\tL E_\tL, \mr{y})+\eta).
\end{align*}
Since the action $\rho$ is a linear group action, this simplifies to
\begin{align*}
    d
    &= \rho(\hat{X}_\tL^{-1}, \rho(\hat{X}_\tL E_\tL, \mr{y})) + \rho(\hat{X}_\tL^{-1}, \eta) \notag \\
    &= \rho(E_\tL, \mr{y}) + \rho(\hat{X}_\tL^{-1}, \eta) = h(E_\tL) + \rho(\hat{X}_\tL^{-1}, \eta).
\end{align*}
The linearisation about $E_\tL = I$ and $\eta = 0$ yields
\begin{align}\label{eq:linearisation_rightmeasurement_L}
    d = \mr{y} + \tD_X|_I h(X) \varepsilon_\tL + \tD_z|_{h(\hat{X}_\tL)} \rho(\hat{X}_\tL^{-1}, z) \eta
    + O(\vert \eta, \varepsilon_\tL \vert^2).
\end{align}
where $E_\tL = I + \varepsilon_\tL + O (\vert\varepsilon_\tL \vert^2)$. 
This leads to the L-IEKF measurement update with left-invariant measurements.
\begin{definition}[L-IEKF left-invariant measurement update]\label{def:left_measurement_update}
    Let $(\mu_\tL^-, \hat{X}_\tL^-, \Sigma_\tL^-)$ denote the L-IEKF state with $\mu_\tL^-=0$, and let $y$ be a left-invariant measurement as described in \ref{def:left_invariant_measurement}.
    The posterior L-IEKF state $(\mu_\tL^+, \hat{X}_\tL^-, \Sigma_\tL^+)$ is obtained through the \emph{left update step}
    \begin{align*}
        S_\tL &:= C_\tL \Sigma_\tL^- C_\tL^\top + D_\tL R D_\tL^\top, \\
        K_\tL &:= \Sigma_\tL^- C_\tL^\top S_\tL^{-1}, \\
        \mu_\tL^+ &= K_\tL (\rho({\hat{X}_\tL^-}^{-1}, y) - \mr{y}), \\
        \Sigma_\tL^+ &= (I - K_\tL C_\tL) \Sigma_\tL^-,
    \end{align*}
    where $C_\tL, D_\tL$ are the linearisation matrices defined by
    \begin{align}
        C_\tL &:= \tD_X|_I h(X), \label{eq:CL_left_y_L-IEKF}\\
        D_\tL &:=  \tD_z|_{h(\hat{X}_\tL^-)} \rho_{{\hat{X}_\tL^-}^{-1}}(z).\label{eq:DL_left_y_L-IEKF}
    \end{align}
\end{definition}

As shown in Def.~\ref{def:left_measurement_update}, by using the pseudo measurement $d$, the L-IEKF now admits a state-independent linearisation matrix $C_\tL = \tD_X|_I h(X)$, which is the main reason why L-IEKF is always preferred over R-IEKF in this case.
However, note that the same pseudo-measurement construction is also possible for R-IEKF.
Consider an R-IEKF and the same pseudo measurement $d = \rho(\hat{X}_\tR^{-1}, y)$, one now has 
\begin{align*}
    d &= \rho(\hat{X}_\tR^{-1}, y) = \rho(\hat{X}_\tR^{-1}, \rho(X, \mr{y})+\eta) \\ 
    &= \rho(\hat{X}_\tR^{-1}, \rho( X \hat{X}_\tR^{-1} \hat{X}_\tR, \mr{y})+\eta) \\ 
    &= \rho(\hat{X}_\tR^{-1}, \rho( E_\tR \hat{X}_\tR, \mr{y})+\eta).
\end{align*}
Linearising about $E_\tR = I$ and $\eta = 0$ yields
\begin{align}
    d = \mr{y} + \tD_X|_I h(X) \Ad_{\hat{X}_\tR^{-1}} \varepsilon_\tR + &\tD_z|_{h(\hat{X}_\tR)} \rho(\hat{X}_\tR^{-1}, z) \eta \notag\\
    &\qquad\qquad + O(\vert \eta, \varepsilon_\tR \vert^2).\label{eq:linearisation_rightmeasurement_R}
\end{align}
This leads to the left-invariant measurement updates for the R-IEKF using pseudo-measurements.

\begin{definition}[R-IEKF left-invariant measurement update]
    Let $(\mu_\tR^-, \hat{X}_\tR^-, \Sigma_\tR^-)$ denote the R-IEKF state with $\mu_\tR^-=0$, and let $y$ be a left measurement as described in \ref{def:left_invariant_measurement}.
    The posterior R-IEKF state $(\mu_\tR^+, \hat{X}_\tR^-, \Sigma_\tR^+)$ is obtained through the \emph{right update step}
    \begin{align*}
        S_\tR &:= C_\tR \Sigma_\tR^- C_\tR^\top + D_\tR R D_\tR^\top, \\
        K_\tR &:= \Sigma_\tR^- C_\tR^\top S_\tR^{-1}, \\
        \mu_\tR^+ &= K_\tR (\rho({\hat{X}_\tR^-}^{-1}, y) - \mr{y}), \\
        \Sigma_\tR^+ &= (I - K_\tR C_\tR) \Sigma_\tR^-,
    \end{align*}
    where $C_\tR, D_\tR$ are the linearisation matrices defined by
    \begin{align}
        C_\tR &:= \tD_X|_I h(X) \cdot \Ad_{{\hat{X}_\tR^-}^{-1}}^\vee, \label{eq:CL_left_y_R-IEKF}\\
        D_\tR &:=  \tD_z|_{h(\hat{X}_\tR^-)} \rho_{{\hat{X}_\tR^-}^{-1}}( z).\label{eq:DL_left_y_R-IEKF}
    \end{align}
\end{definition}

\begin{remark}
In much of the literature, it is always encouraged to use L-IEKF over the R-IEKF when a left-invariant measurement is available, because of the state-independent linearisation matrix $C_\tL = \tD_X|_I h(X)$ in \eqref{eq:linearisation_rightmeasurement_L}.
However, as shown in \eqref{eq:linearisation_rightmeasurement_L}, when using the pseudo measurement $d$ instead of $y$, the new noise process now depends on the linearisation matrix $\tD_z|_{h(\hat{X}_\tL^-)} \rho(\hat{X}^{-1}, z)$, which is state-dependent.
This finding is also observed in \cite{liErrorDynamicsAffine2025}.
\end{remark}

\section{Equivalence of Left and Right IEKF with reset}
\label{sec:equivalence}
In this section, we present our main claim that the L-IEKF and R-IEKF with reset always yield equivalent estimates of the state at each filter step.
In Section~\ref{sub:definition}, we define the equivalence of the L-IEKF and R-IEKF with reset based on the underlying distribution of the state estimate represented by concentrated Gaussians.
The main equivalence result is presented in Theorem~\ref{thm:IEKF_equivalence} in the following section.
The proof of the theorem relies on proving equivalence through each filter step, which is shown in Lemma~\ref{lem:prediction_equivalence}-\ref{lem:reset_equivalence}.

\subsection{Definition of equivalence}
\label{sub:definition}
The equivalence of the L-IEKF and R-IEKF with reset is defined based on the underlying distribution of the state estimate represented by the concentrated Gaussians.

\begin{lemma}\label{lemma:cgd_equiv}
    Consider a L-CGD $\GP^\tL_{\hat{X}_\tL}(\mu_\tL, \Sigma_\tL)$ and a R-CGD $\GP^\tR_{\hat{X}_\tR}(\mu_\tR, \Sigma_\tR)$ defined by \eqref{eq:L_CGD} and \eqref{eq:R_CGD} respectively. 
    The two distributions are equivalent (as functions on $\grpG$) if 
    \begin{align}
        \mu_\tR &= \Ad_{\hat{X}_\tL} \mu_\tL, &
        \hat{X}_\tR &= \hat{X}_\tL, &
        \Sigma_\tR &= \Ad_{\hat{X}_\tL}^\vee \Sigma_\tL {\Ad_{\hat{X}_\tL}^\vee}^\top.
\label{eq:left_right_correspondence_condition}
    \end{align}
\end{lemma}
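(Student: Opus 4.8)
The plan is to use the random-variable description of the two concentrated Gaussians recorded immediately before the statement and reduce the claim to the fact that an affine adjoint reparametrisation of the Lie-algebra coordinate carries the L-CGD law onto the R-CGD law while producing the same group element $g$. Recall that the L-CGD is the law of $g = \hat{X}_\tL\expG(\varepsilon_\tL)$ with $\varepsilon_\tL^\vee \sim \GP(\mu_\tL^\vee, \Sigma_\tL)$, and the R-CGD is the law of $g = \expG(\varepsilon_\tR)\hat{X}_\tR$ with $\varepsilon_\tR^\vee \sim \GP(\mu_\tR^\vee, \Sigma_\tR)$; the goal is to show these two laws on $\grpG$ coincide under \eqref{eq:left_right_correspondence_condition}.

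First I would invoke the conjugation identity $\hat{X}_\tL\expG(\eta)\hat{X}_\tL^{-1} = \expG(\Ad_{\hat{X}_\tL}\eta)$, which rewrites the L-CGD sample as $g = \hat{X}_\tL\expG(\varepsilon_\tL) = \expG(\Ad_{\hat{X}_\tL}\varepsilon_\tL)\hat{X}_\tL$. With $\hat{X}_\tR = \hat{X}_\tL$ this is exactly the R-CGD parametrisation under the linear substitution $\varepsilon_\tR = \Ad_{\hat{X}_\tL}\varepsilon_\tL$, so the two constructions return an identical $g$ once this substitution is made. It then remains only to track the Gaussian law: the affine map $\varepsilon_\tL^\vee \mapsto \Ad_{\hat{X}_\tL}^\vee\varepsilon_\tL^\vee$ pushes $\GP(\mu_\tL^\vee, \Sigma_\tL)$ forward to $\GP(\Ad_{\hat{X}_\tL}^\vee\mu_\tL^\vee, \Ad_{\hat{X}_\tL}^\vee\Sigma_\tL{\Ad_{\hat{X}_\tL}^\vee}^\top)$, and this coincides with $\GP(\mu_\tR^\vee, \Sigma_\tR)$ precisely under the correspondence \eqref{eq:left_right_correspondence_condition}. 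Hence $\varepsilon_\tR$ carries the prescribed R-CGD law and the two pushforward distributions on $\grpG$ are identical, which is the measure-theoretically cleanest route since it does not require the logarithm to be globally invertible.

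As an equivalent density-level verification I would instead fix $g$, set $\xi_\tL := \logGv(\hat{X}_\tL^{-1}g)$ and $\xi_\tR := \logGv(g\hat{X}_\tR^{-1})$, deduce $\xi_\tR = \Ad_{\hat{X}_\tL}^\vee\xi_\tL$ from the same conjugation identity, and substitute into the R-CGD exponent in \eqref{eq:R_CGD}. Using $\mu_\tR^\vee = \Ad_{\hat{X}_\tL}^\vee\mu_\tL^\vee$ to factor the shift and the covariance relation to obtain the cancellation ${\Ad_{\hat{X}_\tL}^\vee}^\top\Sigma_\tR^{-1}\Ad_{\hat{X}_\tL}^\vee = \Sigma_\tL^{-1}$, the quadratic form collapses pointwise to the L-CGD exponent in \eqref{eq:L_CGD}. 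The substantive content is this single adjoint cancellation, which is routine once the conjugation identity is fixed; the only genuinely delicate point is the choice of integration measure on $\grpG$ and hence the equality of the normalising factors $\alpha$, but this is exactly the issue addressed in the paragraph preceding the lemma, where on the unimodular groups of interest $|\det\Ad_{\hat{X}_\tL}^\vee| = 1$ forces the Jacobian factors, and therefore the constants, to agree.
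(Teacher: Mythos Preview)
Your proposal is correct, and your second route---the density-level verification that sets $\xi_\tL = \logGv(\hat{X}_\tL^{-1}g)$, $\xi_\tR = \logGv(g\hat{X}_\tR^{-1})$, uses $\xi_\tR = \Ad_{\hat{X}_\tL}^\vee \xi_\tL$, and cancels the adjoint against the covariance to collapse the quadratic form---is exactly the paper's proof, which computes $\Lyap_\tR$ and reduces it to $\Lyap_\tL$ by the same substitution.

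Your first route via the random-variable description is a mild variant: rather than matching the exponents pointwise, you push forward the Gaussian law under the linear map $\varepsilon_\tL^\vee \mapsto \Ad_{\hat{X}_\tL}^\vee \varepsilon_\tL^\vee$ and identify the resulting parameters. This buys you a statement about equality of pushforward measures that does not depend on the $\log$ being globally defined, whereas the paper's density computation is implicitly restricted to the subset $\grpG'$ where $\logG$ is invertible. You are also more explicit about the normalising constants: the paper simply drops $\alpha$ as a cumulant offset, while you correctly tie equality of the constants to $|\det \Ad_{\hat{X}_\tL}^\vee| = 1$ on unimodular groups, consistent with the discussion preceding the lemma.
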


\begin{proof}
    We prove this lemma by showing that the log likelihood of left and right distributions is the same under the conditions of the lemma, that is,
    \begin{align*}
        \Lyap_\tL(\mu_\tL, \hat{X}_\tL, \Sigma_\tL)
        &= \Lyap_\tR(\mu_\tR, \hat{X}_\tR, \Sigma_\tR).
    \end{align*}
    This is shown by straightforward computation.
    \begin{align*}
        \Lyap_\tR&(\mu_\tR, \hat{X}_\tR, \Sigma_\tR)\\
        &= \frac{1}{2} \vert \log(X \hat{X}_\tR^{-1}) - \mu_\tR \vert_{\Sigma_\tR^{-1}}^2 \\
        &= \frac{1}{2} \vert \log(X \hat{X}_\tL^{-1}) - \Ad_{\hat{X}_\tL} \mu_\tL \vert_{(\Ad_{\hat{X}_\tL}^\vee \Sigma_\tL {\Ad_{\hat{X}_\tL}^\vee}^\top)^{-1}}^2 \\
        &= \frac{1}{2} \vert  \Ad_{\hat{X}_\tL} \log(\hat{X}_\tL^{-1} X) - \Ad_{\hat{X}_\tL} \mu_\tL \vert_{(\Ad_{\hat{X}_\tL}^\vee \Sigma_\tL {\Ad_{\hat{X}_\tL}^\vee}^\top)^{-1}}^2  \\
        &= \frac{1}{2} \vert  \log(\hat{X}_\tL^{-1} X) -  \mu_\tL \vert_{\Sigma_\tL^{-1}}^2 \\
        &= \Lyap_\tL(\mu_\tL, \hat{X}_\tL, \Sigma_\tL),
    \end{align*}
    for any $X_\tL = X_\tR \in \grpG$.
\end{proof}

Concretely, this lemma shows that the L-IEKF and R-IEKF, which use left and right concentrated Gaussians for state representation respectively, have the exact same log likelihood function when the states correspond according to the condition \eqref{eq:left_right_correspondence_condition}.
This leads to the following definition of equivalence between the L-IEKF and R-IEKF with reset.
\begin{definition}\label{def:equivalent}
Let $(\mu_\tL, \hat{X}_\tL, \Sigma_\tL)$ and $(\mu_\tR, \hat{X}_\tR, \Sigma_\tR)$ denote the states of the L-IEKF and R-IEKF, respectively.
We say the L-IEKF and R-IEKF states are \emph{equivalent} if \eqref{eq:left_right_correspondence_condition} holds. 
\end{definition}

\subsection{Equivalence of filter steps}
\label{sub:equivalence}
In this section, we first present the main theorem, which shows that the evolution of the left and right filter estimates matches, and hence the likelihood functions are equivalent.
Its proof relies on the lemmas that are proven later in the section. 
Note that we will re-introduce the time index $t_k, t_{k+1}$ when necessary to clarify the time evolution of the filter propagation.

\begin{theorem}\label{thm:IEKF_equivalence}
Let $(\mu_\tL, \hat{X}_\tL, \Sigma_\tL)$ and $(\mu_\tR, \hat{X}_\tR, \Sigma_\tR)$ denote the states of the L-IEKF and R-IEKF with reset, respectively.
Assume that the measurements $y_k$ admit left-invariance (Section~\ref{sub:invariant_measurement}). 
If the initial conditions of the states are equivalent (at a time $t_0$), 
\begin{align}
\mu_\tR(t_0) & = 0 = \Ad_{X_\tL(t_0)} \mu_\tL(t_0) \notag \\ 
\hat{X}_\tR(t_0)  & = \hat{X}_\tL(t_0) \notag \\
\Sigma_\tR (t_0) & = \Ad_{X_\tL(t_0)}^\vee\Sigma_\tL(t_0) {\Ad_{X_\tL(t_0)}^\vee}^\top, \notag 
\end{align}
with $\mu_\tL(t_0) = \mu_\tR(t_0) = 0$, then the states are equivalent for all time $t \geq t_0$.
\end{theorem}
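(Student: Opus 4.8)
The plan is to prove the theorem by induction on the discrete filter steps, showing that equivalence in the sense of Definition~\ref{def:equivalent} (i.e.\ condition \eqref{eq:left_right_correspondence_condition}) is preserved \emph{individually} by each of the three filter stages: predict, update, and reset. The base case is precisely the hypothesis at $t_0$. For the inductive step, assuming the L-IEKF and R-IEKF states are equivalent at the start of step $k+1$, I would carry equivalence through the continuous prediction over $[t_k,t_{k+1}]$, then through the discrete update, and finally through the reset, which returns both filters to zero offset at the start of step $k+2$. The theorem is then assembled from three step-lemmas, one per stage. I expect the predict stage to be the main obstacle, as it is the only genuinely dynamical part and requires correctly tracking the time-dependence of $\Ad_{\hat{X}(t)}$.

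For the predict stage, both references integrate the identical ODE $\dot{\hat{X}}=f_{v_m}(\hat{X})$ from the same initial value, so $\hat{X}_\tL(t)=\hat{X}_\tR(t)$ throughout and both offsets stay zero. The substantive claim is that $P(t):=\Ad_{\hat{X}(t)}^\vee \Sigma_\tL(t)\,{\Ad_{\hat{X}(t)}^\vee}^\top$ satisfies the R-Riccati equation of Definition~\ref{def:right_prediction}. Writing $M=\Ad_{\hat{X}}^\vee$ and $\lambda=\Lambda(\hat{X},v_m)$, I would use $\dot{M}=M\,\ad_\lambda^\vee$, the relation $A_\tR = M(A_\tL+\ad_\lambda^\vee)M^{-1}$ obtained from \eqref{eq:AL}--\eqref{eq:AR} via $\tD\tR_{\hat{X}}=\tD\tL_{\hat{X}}\circ\Ad_{\hat{X}^{-1}}$, and $B_\tR = M B_\tL$ from \eqref{eq:BL}--\eqref{eq:BR}. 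Differentiating $P$ and substituting the $\Sigma_\tL$ dynamics collapses the residual $\ad_\lambda^\vee$ terms into the conjugation and yields $\dot{P}=A_\tR P + P A_\tR^\top + B_\tR Q B_\tR^\top$. Since $P(t_k)=\Sigma_\tR(t_k)$ by hypothesis, uniqueness of solutions of the matrix ODE forces $P(t_{k+1})=\Sigma_\tR^-$, giving equivalence at the end of prediction.

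For the update stage, assumed left-invariant, I would use $\hat{X}_\tL^-=\hat{X}_\tR^-$ to note that the pseudo-measurement innovation $\rho({\hat{X}^-}^{-1},y)-\mr{y}$ is identical for both filters, while \eqref{eq:CL_left_y_L-IEKF}--\eqref{eq:DL_left_y_R-IEKF} give $C_\tR = C_\tL (M^-)^{-1}$ and $D_\tR = D_\tL$, with $M^-=\Ad_{\hat{X}^-}^\vee$. A short computation then shows $S_\tR = S_\tL$, hence $K_\tR = M^- K_\tL$, so that $\mu_\tR^+ = M^- \mu_\tL^+$ and $\Sigma_\tR^+ = M^- \Sigma_\tL^+ {M^-}^\top$, preserving \eqref{eq:left_right_correspondence_condition} with the reference unchanged.

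For the reset stage I would first check that the references re-synchronise: using $\mu_\tR^+ = \Ad_{\hat{X}^-}\mu_\tL^+$ and the conjugation identity $\expG(\Ad_{\hat{X}^-}\mu_\tL^+)=\hat{X}^-\expG(\mu_\tL^+){\hat{X}^-}^{-1}$ gives $\hat{X}_\tR = \expG(\mu_\tR^+)\hat{X}^- = \hat{X}^-\expG(\mu_\tL^+) = \hat{X}_\tL$. The covariance requires relating the Jacobians \eqref{eq:JL},\eqref{eq:JR}. Writing them as the right/left Jacobian matrix functions of $\ad^\vee$ and using $\ad_{\Ad_h \mu}^\vee = \Ad_h^\vee \ad_\mu^\vee (\Ad_h^\vee)^{-1}$, I would obtain $J_\tR = M^-(\Ad_{g_\tL}^\vee J_\tL)(M^-)^{-1}$ with $g_\tL=\expG(\mu_\tL^+)$, whence $J_\tR M^- = M J_\tL$ for the new adjoint $M=\Ad_{\hat{X}_\tL}^\vee = M^-\Ad_{g_\tL}^\vee$. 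Substituting into $\Sigma_\tR = J_\tR \Sigma_\tR^+ J_\tR^\top$ and using $\Sigma_\tR^+ = M^- \Sigma_\tL^+ {M^-}^\top$ then gives $\Sigma_\tR = M \Sigma_\tL M^\top$, closing the induction. This stage is conceptually subtle but, once the two Jacobian $\Ad$-conjugation identities are in hand, reduces to algebra; the genuine difficulty remains the dynamical bookkeeping in the predict stage.
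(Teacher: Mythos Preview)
Your proposal is correct and follows essentially the same route as the paper: an induction over filter iterations, broken into predict/update/reset lemmas, with the predict stage handled by showing that $P(t)=\Ad_{\hat{X}}^\vee\Sigma_\tL{\Ad_{\hat{X}}^\vee}^\top$ solves the right Riccati equation via the identity $A_\tR=M(A_\tL+\ad_\lambda^\vee)M^{-1}$ and uniqueness of ODE solutions, and the update and reset stages by the same algebraic identities you state. The only cosmetic difference is that the paper derives the reset relation $J_\tR=\Ad_{\hat{X}_\tL}^\vee J_\tL\,\Ad_{{\hat{X}_\tL^-}^{-1}}^\vee$ by a direct directional-derivative computation from the definitions \eqref{eq:JL}--\eqref{eq:JR}, rather than via the series representation and the conjugation identity for $\ad^\vee$; both routes yield the same formula.
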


\begin{proof}
Lemma \ref{lem:prediction_equivalence} proves that if the initial state of the L-IEKF and R-IEKF
\[
(\mu_\tL, \hat{X}_\tL, \Sigma_\tL) = (\mu_\tL(t_k), \hat{X}_\tL(t_k), \Sigma_\tL(t_k)) \] and \[ 
(\mu_\tR, \hat{X}_\tR, \Sigma_\tR) = (\mu_\tR(t_k), \hat{X}_\tR(t_k), \Sigma_\tR(t_k)) 
\]
are equivalent, then the filter state after propagation
$(\mu_\tL^-,\allowbreak \hat{X}_\tL^-, \Sigma_\tL^-)$ and $(\mu_\tR^-, \hat{X}_\tR^-, \Sigma_\tR^-)$
are equivalent.

Lemma \ref{lem:measurement_equivalence_right} proves that if 
$(\mu_\tL^-, \hat{X}_\tL^-,\allowbreak \Sigma_\tL^-)$ and $(\mu_\tR^-, \hat{X}_\tR^-,\allowbreak \Sigma_\tR^-)$  are equivalent then the filter states after the measurement update
$(\mu_\tL^+, \hat{X}_\tL^-, \Sigma_\tL^+)$ and $(\mu_\tR^+, \hat{X}_\tR^-, \Sigma_\tR^+)$  are equivalent. 

Lemma \ref{lem:reset_equivalence} proves that if $(\mu_\tL^+, \hat{X}_\tL^-, \Sigma_\tL^+)$ and $(\mu_\tR^+, \hat{X}_\tR^-,\allowbreak \Sigma_\tR^+)$  are equivalent 
then the filter states after the reset step
\[
(\mu_\tL(t_{k+1}), \hat{X}_\tL(t_{k+1}), \Sigma_\tL(t_{k+1})) = (\mu_\tL, \hat{X}_\tL, \Sigma_\tL)\]
and
\[(\mu_\tR(t_{k+1}), \hat{X}_\tR(t_{k+1}), \Sigma_\tR(t_{k+1}))  = (\mu_\tR, \hat{X}_\tR, \Sigma_\tR)
\]
are equivalent. 

By induction, this completes the proof.
\end{proof}
\begin{remark}
    We only present the technical results with left-invariant measurements since this matches the setup in the INS problem. 
    However, the equivalence holds for any type of measurements, and it is straightforward to extend the proof to the case of right-invariant measurements or general measurements.
    We omit the detailed proofs for other types of measurements.
    A similar analysis has recently been performed in \cite{maurerEquivalenceLeftRightInvariant2025} for the case of general measurement update.
\end{remark}

\begin{lemma}[Predict step equivalence]\label{lem:prediction_equivalence}
    Let $(\mu_\tL, \hat{X}_\tR, \Sigma_\tL)$ and $(\mu_\tR, \hat{X}_\tR, \Sigma_\tR)$ be the states of the L-IEKF and R-IEKF, respectively, with $\mu_\tL = \mu_\tR = 0$.
    Suppose that the L-IEKF and R-IEKF states are equivalent (Def.~\ref{def:equivalent}) at a time $t_0$.
    Then the L-IEKF and R-IEKF states remain equivalent for as long as the prediction dynamics are applied.
\end{lemma}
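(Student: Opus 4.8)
The plan is to verify that each of the three equivalence conditions in \eqref{eq:left_right_correspondence_condition} is preserved by the prediction ODEs, given that they hold at $t_0$. The offset and reference conditions are immediate. Since $\dot{\mu}_\tL = \dot{\mu}_\tR = 0$ with $\mu_\tL(t_0) = \mu_\tR(t_0) = 0$, both offsets stay identically zero, so $\mu_\tR = \Ad_{\hat{X}_\tL}\mu_\tL$ holds trivially. For the reference, $\hat{X}_\tL$ and $\hat{X}_\tR$ satisfy the same autonomous ODE $\dot{\hat{X}} = f_{v_m}(\hat{X})$ with equal initial data $\hat{X}_\tL(t_0) = \hat{X}_\tR(t_0)$, so uniqueness of solutions gives $\hat{X}_\tL(t) = \hat{X}_\tR(t) =: \hat{X}(t)$ for all $t$. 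This leaves the covariance relation as the only substantive claim.

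To handle the covariance I would introduce the candidate trajectory $\tilde{\Sigma}_\tR(t) := \Ad_{\hat{X}(t)}^\vee \Sigma_\tL(t) {\Ad_{\hat{X}(t)}^\vee}^\top$ and show it satisfies the R-IEKF Riccati ODE of Definition~\ref{def:right_prediction}. Since $\tilde{\Sigma}_\tR(t_0) = \Sigma_\tR(t_0)$ by hypothesis, uniqueness of solutions then forces $\tilde{\Sigma}_\tR \equiv \Sigma_\tR$, which is exactly the required covariance equivalence. Differentiating $\tilde{\Sigma}_\tR$ needs the time derivative of the Adjoint matrix along the reference flow: writing $\lambda := \Lambda(\hat{X}, v_m)$, the dynamics $\dot{\hat{X}} = \hat{X}\lambda$ give the standard identity $\ddt \Ad_{\hat{X}}^\vee = \Ad_{\hat{X}}^\vee \ad^\vee_\lambda$. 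Substituting $\dot{\Sigma}_\tL$ from Definition~\ref{def:left_prediction} and collecting terms, the derivative becomes $\Ad_{\hat{X}}^\vee \bigl[(A_\tL + \ad^\vee_\lambda)\Sigma_\tL + \Sigma_\tL (A_\tL + \ad^\vee_\lambda)^\top + B_\tL Q B_\tL^\top\bigr] {\Ad_{\hat{X}}^\vee}^\top$.

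The crux is then a pair of algebraic identities linking the left and right linearisation matrices through the adjoint. First, the $-\ad^\vee_\lambda$ term deliberately present in $A_\tL$ (see \eqref{eq:AL}) cancels the $+\ad^\vee_\lambda$ produced by $\ddt \Ad_{\hat{X}}^\vee$, leaving $A_\tL + \ad^\vee_\lambda = \tD_X|_{\hat{X}}\Lambda(X,v_m)\cdot \tD\tL_{\hat{X}}(I)$. Second, the trivialisation identity $\tD\tL_{\hat{X}}(I) = \tD\tR_{\hat{X}}(I)\cdot \Ad_{\hat{X}}^\vee$, which follows by differentiating $\hat{X}\expG(s\xi) = \expG(s\,\Ad_{\hat{X}}\xi)\hat{X}$ at $s=0$, combines with \eqref{eq:AR} to yield the intertwining relation $\Ad_{\hat{X}}^\vee (A_\tL + \ad^\vee_\lambda) = A_\tR \,\Ad_{\hat{X}}^\vee$. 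Finally, $B_\tR = \Ad_{\hat{X}}^\vee \Upsilon = \Ad_{\hat{X}}^\vee B_\tL$ from \eqref{eq:BL}--\eqref{eq:BR} handles the noise term. Conjugating the bracketed expression by $\Ad_{\hat{X}}^\vee$ and applying the intertwining relation (and its transpose) then turns it into $A_\tR \tilde{\Sigma}_\tR + \tilde{\Sigma}_\tR A_\tR^\top + B_\tR Q B_\tR^\top$, confirming that $\tilde{\Sigma}_\tR$ solves the R-IEKF Riccati equation, and the uniqueness argument closes the proof.

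I expect the main obstacle to be bookkeeping the intertwining identity $\Ad_{\hat{X}}^\vee(A_\tL + \ad^\vee_\lambda) = A_\tR\,\Ad_{\hat{X}}^\vee$ with the trivialisations composed in the correct order, since a stray transpose or an inverted left/right translation propagates through the entire conjugation. Conceptually, however, the content is transparent: the $\ad^\vee_\lambda$ correction built into $A_\tL$ is exactly the term needed to reconcile the time-varying change of frame $\Ad_{\hat{X}(t)}^\vee$ relating the two representations, so once $\ddt \Ad_{\hat{X}}^\vee = \Ad_{\hat{X}}^\vee \ad^\vee_\lambda$ is established the remaining steps are forced.
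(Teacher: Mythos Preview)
Your proposal is correct and follows essentially the same approach as the paper's proof: define the conjugated covariance $P = \Ad_{\hat{X}_\tL}^\vee \Sigma_\tL {\Ad_{\hat{X}_\tL}^\vee}^\top$, differentiate using $\ddt \Ad_{\hat{X}}^\vee = \Ad_{\hat{X}}^\vee \ad^\vee_\lambda$, verify the intertwining relation $\Ad_{\hat{X}}^\vee(A_\tL + \ad^\vee_\lambda){\Ad_{\hat{X}}^\vee}^{-1} = A_\tR$ together with $\Ad_{\hat{X}}^\vee B_\tL = B_\tR$, and conclude by uniqueness of ODE solutions. Your write-up is slightly more explicit than the paper in spelling out the offset and reference components and in isolating the trivialisation identity $\tD\tL_{\hat{X}}(I) = \tD\tR_{\hat{X}}(I)\cdot \Ad_{\hat{X}}^\vee$, but the structure and key computations are the same.
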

    
\begin{proof}
    This is shown by applying the uniqueness of smooth ODE solutions. 
    Let $P := \Ad_{\hat{X}_\tL} \Sigma_\tL \Ad_{\hat{X}_\tL}^\top$ 
    where $(0,X_\tL,\Sigma_\tL)$ is the solution of the left-invariant prediction dynamics. 
    Then $P(t_0) = \Sigma_\tR(t_0)$ and the dynamics of $P$ are given by 
    \begin{align*}
        \dot{P}
        &= \ddt \Ad_{\hat{X}_\tL}^\vee \Sigma_\tL {\Ad_{\hat{X}_\tL}^\vee}^\top \\
        &= \Ad_{\hat{X}_\tL}^\vee (A_\tL \Sigma_\tL + \Sigma_\tL A_\tL^\top + B_\tL Q B_\tL^\top) {\Ad_{\hat{X}_\tL}^\vee}^\top
        \\&\hspace{1cm}
        + \Ad_{\hat{X}_\tL}^\vee \ad_{\Lambda(\hat{X}_\tL, v_m)}^\vee \Sigma_\tL {\Ad_{\hat{X}_\tL}^\vee}^\top
        \\&\hspace{1cm}
        + \Ad_{\hat{X}_\tL}^\vee \Sigma_\tL {\ad_{\Lambda(\hat{X}_\tL, v_m)}^\vee}^\top {\Ad_{\hat{X}_\tL}^\vee}^\top \\
        &= (\Ad_{\hat{X}_\tL}^\vee A_\tL {\Ad_{\hat{X}_\tL}^\vee}^{-1} + \Ad_{\hat{X}_\tL}^\vee \ad_{\Lambda(\hat{X}_\tL, v_m)}^\vee {\Ad_{\hat{X}_\tL}^\vee}^{-1}) P 
        \\&\hspace{0.2cm}
        + P (\Ad_{\hat{X}_\tL}^\vee A_\tL {\Ad_{\hat{X}_\tL}^\vee}^{-1} + \Ad_{\hat{X}_\tL}^\vee \ad_{\Lambda(\hat{X}_\tL, v_m)}^\vee {\Ad_{\hat{X}_\tL}^\vee}^{-1})^\top
        \\&\hspace{0.2cm}
        + \Ad_{\hat{X}_\tL}^\vee B_\tL Q B_\tL^\top  {\Ad_{\hat{X}_\tL}^\vee}^\top.
    \end{align*}
    where $A_\tL$ and $B_\tL$ are the linearisation matrices given in Def.~\ref{def:left_prediction}. 
    Observe that
    \begin{align*}
        &\Ad_{\hat{X}_\tL}^\vee A_\tL {\Ad_{\hat{X}_\tL}^\vee}^{-1} + \Ad_{\hat{X}_\tL}^\vee \ad_{\Lambda(\hat{X}_\tL, v_m)}^\vee {\Ad_{\hat{X}_\tL}^\vee}^{-1} \\
        &= \Ad_{\hat{X}_\tL}^\vee (\tD_X|_{\hat{X}_\tL} \Lambda(X, v_m) \cdot\tD \tL_{\hat{X}_\tL}(I) \\
        & \qquad -\ad_{\Lambda(\hat{X}_\tL, v_m)}^\vee){\Ad_{\hat{X}_\tL}^\vee}^{-1} + \Ad_{\hat{X}_\tL}^\vee \ad_{\Lambda(\hat{X}_\tL, v_m)}^\vee {\Ad_{\hat{X}_\tL}^\vee}^{-1} \\
        &= \Ad_{\hat{X}_\tL}^\vee \tD_X|_{\hat{X}_\tL} \Lambda(X, v_m) \cdot \tD \tL_{\hat{X}_\tL}(I) {\Ad_{\hat{X}_\tL}^\vee}^{-1} \\
        &= \Ad_{\hat{X}_\tL}^\vee \tD_X|_{\hat{X}_\tL} \Lambda(X, v_m) \cdot \tD \tR_{\hat{X}_\tL}(I) \\
        &= A_\tR, 
    \end{align*}
    while 
    \[
    \Ad_{\hat{X}_\tL}^\vee B_\tL = \Ad_{\hat{X}_\tL}^\vee \Upsilon = B_\tR,
    \]
    and hence $(0,\hat{X}_\tL, P)$ obey the R-IEKF prediction dynamics.
    Uniqueness of ODE solutions implies that $\hat{X}_\tL = \hat{X}_\tR$ and $P(t) = \Sigma_\tR(t)$ for all time.
\end{proof}

\begin{remark}
    The result in Lemma~\ref{lem:prediction_equivalence} depends only on the algebraic structure of the EKF equations.  
    In particular, it does not require group-affine dynamics or any other additional properties of the system.
\end{remark}

The next result shows that the update step preserves the equivalence of the L-IEKF and R-IEKF solutions.

\begin{lemma}[Left-invariant measurement update equivalence]\label{lem:measurement_equivalence_right}
    Let $(\mu_\tL^-, \hat{X}_\tL^-, \Sigma_\tL^-)$ and $(\mu_\tR^-, \hat{X}_\tR^-, \Sigma_\tR^-)$ be the states of the L-IEKF and R-IEKF, respectively, with $\mu_\tL^-=\mu_\tR^- = 0$.
    Suppose that the states are equivalent (Def.~\ref{def:equivalent}) at the instant a left-invariant measurement $y$ is received.
    If the respective update steps are applied to both the L-IEKF and R-IEKF, then the posterior states $(\mu_\tL^+, \hat{X}_\tL^-, \Sigma_\tL^+)$ and $(\mu_\tR^+, \hat{X}_\tR^-, \Sigma_\tR^+)$ are equivalent as well.
\end{lemma}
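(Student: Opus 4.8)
The plan is to verify the three equivalence conditions of Definition~\ref{def:equivalent} for the posterior states, namely $\hat{X}_\tR^- = \hat{X}_\tL^-$, $\mu_\tR^+ = \Ad_{\hat{X}_\tL^-}\mu_\tL^+$, and $\Sigma_\tR^+ = \Ad_{\hat{X}_\tL^-}^\vee \Sigma_\tL^+ {\Ad_{\hat{X}_\tL^-}^\vee}^\top$. The reference condition is immediate: neither update step modifies the reference state (both retain $\hat{X}^-$), and $\hat{X}_\tR^- = \hat{X}_\tL^-$ already holds by the equivalence of the prior states. The whole argument therefore reduces to tracking how a single adjoint factor propagates through the Kalman update formulas, and no additional geometric input is needed beyond the definitions of the left-invariant update steps.

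First I would record the algebraic relations between the linearisation matrices of the two filters. Writing $\Ad := \Ad_{\hat{X}_\tL^-}^\vee$ for brevity, equations \eqref{eq:CL_left_y_L-IEKF}--\eqref{eq:DL_left_y_R-IEKF} together with $\hat{X}_\tR^- = \hat{X}_\tL^-$ yield $C_\tR = C_\tL \Ad^{-1}$ and $D_\tR = D_\tL$, where I use that $\Ad_{{\hat{X}_\tR^-}^{-1}}^\vee = (\Ad_{\hat{X}_\tL^-}^\vee)^{-1} = \Ad^{-1}$. The prior equivalence supplies $\Sigma_\tR^- = \Ad \Sigma_\tL^- \Ad^\top$. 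Substituting these into the innovation covariance $S_\tR = C_\tR \Sigma_\tR^- C_\tR^\top + D_\tR R D_\tR^\top$, the two interior $\Ad$ factors cancel against those carried by $C_\tR$ and $C_\tR^\top$, giving $S_\tR = S_\tL$. The Kalman gain then satisfies $K_\tR = \Sigma_\tR^- C_\tR^\top S_\tR^{-1} = \Ad K_\tL$ after a single cancellation $\Ad^\top \Ad^{-\top} = I$.

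With $S_\tR = S_\tL$ and $K_\tR = \Ad K_\tL$ established, the two posterior identities follow by direct substitution. The pseudo-measurement residual $\rho(\hat{X}^{-1}, y) - \mr{y}$ is identical for both filters, since the references coincide and $y, \mr{y}$ are shared; hence $\mu_\tR^+ = K_\tR(\rho({\hat{X}_\tR^-}^{-1}, y) - \mr{y}) = \Ad K_\tL(\rho({\hat{X}_\tL^-}^{-1}, y) - \mr{y}) = \Ad \mu_\tL^+$, which is exactly the required mean condition. For the covariance I would expand $\Sigma_\tR^+ = (I - K_\tR C_\tR)\Sigma_\tR^-$, use $K_\tR C_\tR = \Ad K_\tL C_\tL \Ad^{-1}$ and $\Sigma_\tR^- = \Ad \Sigma_\tL^- \Ad^\top$, and note that the interior product $\Ad^{-1}\Ad = I$ collapses, leaving $\Sigma_\tR^+ = \Ad (I - K_\tL C_\tL)\Sigma_\tL^- \Ad^\top = \Ad \Sigma_\tL^+ \Ad^\top$.

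The proof is essentially bookkeeping of adjoint conjugation, and I do not anticipate a genuine obstacle. The one point that warrants care is the relation $C_\tR = C_\tL \Ad^{-1}$, which hinges on correctly identifying $\Ad_{{\hat{X}_\tR^-}^{-1}}^\vee$ with $(\Ad_{\hat{X}_\tL^-}^\vee)^{-1}$ and on the fact that $D_\tR = D_\tL$ because the two references agree; once these are fixed, every adjoint factor cancels cleanly and all three conditions of Definition~\ref{def:equivalent} are recovered for the posterior states, completing the proof.
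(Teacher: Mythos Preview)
Your proposal is correct and follows essentially the same route as the paper's proof: both note that the reference is unchanged, derive $C_\tR = C_\tL \Ad_{\hat{X}_\tL^-}^{-\vee}$ and $D_\tR = D_\tL$, obtain $S_\tR = S_\tL$, and then push the adjoint factor through the Kalman update to recover $\mu_\tR^+ = \Ad_{\hat{X}_\tL^-}^\vee \mu_\tL^+$ and $\Sigma_\tR^+ = \Ad_{\hat{X}_\tL^-}^\vee \Sigma_\tL^+ {\Ad_{\hat{X}_\tL^-}^\vee}^\top$. Your explicit isolation of $K_\tR = \Ad_{\hat{X}_\tL^-}^\vee K_\tL$ is a minor streamlining the paper leaves implicit, but the argument is otherwise the same adjoint bookkeeping.
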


\begin{proof}
    There is no update to the reference and $\hat{X}_\tL^- = \hat{X}_\tR^-$ remain fixed in the update step. 

    Recalling \eqref{eq:CL_left_y_L-IEKF} and \eqref{eq:CL_left_y_R-IEKF} then 
    $C_\tR = C_\tL \Ad_{{\hat{X}_\tR^-}^{-1}}^\vee$. 
    Similarly, from \eqref{eq:DL_left_y_L-IEKF} and \eqref{eq:DL_left_y_R-IEKF} then $D_\tL = D_\tR$.
    Thus one has 
    \begin{align*}
            S_\tR &= C_\tR \Sigma_\tR^- C_\tR^\top + D_\tR R D_\tR^\top\\
            & = C_\tL \Ad_{{\hat{X}_\tR^-}^{-1}}^\vee \Ad_{\hat{X}_\tL}^\vee \Sigma_\tL^- {\Ad_{\hat{X}_\tL^-}^\vee}^\top {\Ad_{{\hat{X}_\tR^-}^{-1}}^\vee}^\top C_\tL^\top \\
            & \hspace{5cm} + D_\tL R D_\tL^\top\\
        &=C_\tL \Sigma_\tL^- C_\tL^\top + D_\tL R D_\tL^\top = S_\tL.
    \end{align*}
    The updated offsets are related by
    \begin{align*}
        \mu_\tR^+
        &= K_\tR (\rho({\hat{X}_\tR^-}^{-1}, y) - \mr{y}) \\
        &= \Sigma_\tR^- C_\tR^\top S_\tR^{-1} (y - h(\hat{X}_\tR^-)) \\
        &= \Ad_{\hat{X}_\tL^-}^\vee \Sigma_\tL^- {\Ad_{\hat{X}_\tL^-}^\vee}^\top (C_\tL {\Ad_{\hat{X}_\tL^-}^\vee}^{-1})^\top S_\tL^{-1}\\
        &\hspace{5cm} \cdot(\rho({\hat{X}_\tR^-}^{-1}, y) - \mr{y}) \\
        &= \Ad_{\hat{X}_\tL^-}^\vee \Sigma_\tL^- C_\tL^\top S_\tL^{-1} (\rho({\hat{X}_\tR^-}^{-1}, y) - \mr{y}) \\
        &= \Ad_{\hat{X}_\tL^-}^\vee \mu_\tL^+.
    \end{align*}
    Finally, the updated covariances satisfy\small
    \begin{align*}
        \Sigma_\tR^+
        &= (I - K_\tR C_\tR) \Sigma_\tR^- = (I - \Sigma_\tR^- C_\tR^\top S_\tR^{-1} C_\tR) \Sigma_\tR^- \\
        &= (I - \Ad_{\hat{X}_\tL^-}^\vee \Sigma_\tL {\Ad_{\hat{X}_\tL^-}^\vee}^\top (C_\tL {\Ad_{{\hat{X}_\tL^-}}^\vee}^{-1})^\top S_\tL^{-1} C_\tL {\Ad_{\hat{X}_\tL^-}^\vee}^{-1})
        \\&\hspace{5cm} \cdot \Ad_{\hat{X}_\tL^-}^\vee \Sigma_\tL {\Ad_{\hat{X}_\tL^-}^\vee}^\top \\
        &= \Ad_{\hat{X}_\tL^-}^\vee (I - \Sigma_\tL^- {\Ad_{\hat{X}_\tL^-}^\vee}^\top (C_\tL {\Ad_{\hat{X}_\tL^-}^\vee}^{-1})^\top S_\tL^{-1} C_\tL)\Sigma_\tL^- {\Ad_{\hat{X}_\tL^-}^\vee}^\top \\
        &= \Ad_{\hat{X}_\tL^-}^\vee (I - \Sigma_\tL^- C_\tL^\top S_\tL^{-1} C_\tL)\Sigma_\tL^- {\Ad_{\hat{X}_\tL^-}^\vee}^\top \\
        &= \Ad_{\hat{X}_\tL^-}^\vee \Sigma_\tL^+ {\Ad_{\hat{X}_\tL^-}^\vee}^\top.
    \end{align*}\normalsize
    This completes the proof.
\end{proof}

The final result required is to prove equivalence under the reset step. 
\begin{lemma}[Reset step equivalence]\label{lem:reset_equivalence}
Let $(\mu_\tL^+, \hat{X}_\tL^-, \Sigma_\tL^+)$ and $(\mu_\tR^+, \hat{X}_\tR^-, \Sigma_\tR^+)$ be the states of the L-IEKF and R-IEKF, respectively, following an update step.
If the states are equivalent \eqref{eq:left_right_correspondence_condition} then after the respective reset steps are applied to both the L-IEKF and R-IEKF, the resulting states $(\mu_\tL, \hat{X}_\tL, \Sigma_\tL)$ and $(\mu_\tR, \hat{X}_\tR, \Sigma_\tR)$ are equivalent.
\end{lemma}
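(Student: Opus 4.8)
The plan is to verify the three equivalence conditions \eqref{eq:left_right_correspondence_condition} for the post-reset states $(\mu_\tL,\hat{X}_\tL,\Sigma_\tL)$ and $(\mu_\tR,\hat{X}_\tR,\Sigma_\tR)$ one at a time. The offset condition is immediate: both reset steps set $\mu_\tL=\mu_\tR=0$, and $0=\Ad_{\hat{X}_\tL}0$ holds trivially. For the reference condition, I would substitute the hypotheses $\hat{X}_\tR^-=\hat{X}_\tL^-$ and $\mu_\tR^+=\Ad_{\hat{X}_\tL^-}\mu_\tL^+$ into the R-IEKF reset and apply the conjugation identity $\expG(\Ad_X\xi)=X\expG(\xi)X^{-1}$:
\begin{align*}
\hat{X}_\tR=\expG(\mu_\tR^+)\hat{X}_\tR^-=\expG(\Ad_{\hat{X}_\tL^-}\mu_\tL^+)\hat{X}_\tL^-=\hat{X}_\tL^-\expG(\mu_\tL^+)=\hat{X}_\tL.
\end{align*}

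Writing $X:=\hat{X}_\tL^-=\hat{X}_\tR^-$, the covariance condition is the substantive part. Using $\Sigma_\tR^+=\Ad_X^\vee\Sigma_\tL^+(\Ad_X^\vee)^\top$ from the hypothesis together with the reset laws $\Sigma_\tL=J_\tL\Sigma_\tL^+J_\tL^\top$ and $\Sigma_\tR=J_\tR\Sigma_\tR^+J_\tR^\top$, the target identity $\Sigma_\tR=\Ad_{\hat{X}_\tL}^\vee\Sigma_\tL(\Ad_{\hat{X}_\tL}^\vee)^\top$ reduces to the single \emph{intertwining relation}
\begin{align*}
J_\tR\,\Ad_X^\vee=\Ad_{\hat{X}_\tL}^\vee\,J_\tL,
\end{align*}
where $J_\tL$ is evaluated at $\mu_\tL^+$ and $J_\tR$ at $\mu_\tR^+=\Ad_X\mu_\tL^+$.

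The cleanest route to this relation is geometric. Introduce the left and right exponential charts $\Phi^\tL_x(\epsilon)=x\expG(\epsilon)$ and $\Phi^\tR_x(\epsilon)=\expG(\epsilon)x$. Then $J_\tL=\tD\psi_\tL|_{\mu_\tL^+}$ is the differential of the chart transition $\psi_\tL=(\Phi^\tL_{\hat{X}_\tL})^{-1}\circ\Phi^\tL_X$, which re-expresses an L-CGD coordinate about $X$ as one about $\hat{X}_\tL=X\expG(\mu_\tL^+)$; explicitly $\psi_\tL(\epsilon)=\logGv(\expG(-\mu_\tL^+)\expG(\epsilon))$, and likewise $J_\tR=\tD\psi_\tR|_{\mu_\tR^+}$ with $\psi_\tR=(\Phi^\tR_{\hat{X}_\tR})^{-1}\circ\Phi^\tR_X$. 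The preliminary identity $\expG(\varepsilon_\tR)x=x\expG(\Ad_{x^{-1}}\varepsilon_\tR)$ shows that the left and right charts about a common reference are intertwined by the adjoint of that reference, i.e. $(\Phi^\tR_x)^{-1}\circ\Phi^\tL_x=\Ad_x^\vee$, and this holds both at the old reference $X$ and at the new reference $\hat{X}_\tL=\hat{X}_\tR$. Chaining these four charts and cancelling yields $\psi_\tR\circ\Ad_X^\vee=\Ad_{\hat{X}_\tL}^\vee\circ\psi_\tL$ as maps on $\R^m$; differentiating at $\mu_\tL^+$ and using $\Ad_X^\vee\mu_\tL^+=\mu_\tR^+$ gives exactly the required $J_\tR\Ad_X^\vee=\Ad_{\hat{X}_\tL}^\vee J_\tL$.

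I expect the intertwining relation to be the main obstacle, since the offset and reference conditions are essentially free and the covariance condition reduces entirely to it. A purely computational alternative would use the closed-form series for the left and right Jacobians of $\expG$, the conjugation property $\ad_{\Ad_X\mu_\tL^+}^\vee=\Ad_X^\vee\ad_{\mu_\tL^+}^\vee(\Ad_X^\vee)^{-1}$, and the standard identity relating the left and right Jacobians by the factor $\Ad_{\expG(\mu_\tL^+)}^\vee$; the delicate point there is the change of base argument from $\mu_\tL^+$ to $\mu_\tR^+=\Ad_X\mu_\tL^+$, which is handled by analyticity of the Jacobian as a function of $\ad^\vee$. Either way, once the intertwining relation is in hand, substituting it into the reduced covariance expression completes the verification and, with the offset and reference conditions, establishes equivalence of the post-reset states.
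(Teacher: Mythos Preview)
Your proposal is correct and follows the same overall architecture as the paper: the offset and reference conditions are dispatched identically, and the covariance condition is reduced to the same intertwining identity $J_\tR\,\Ad_{\hat{X}_\tL^-}^\vee=\Ad_{\hat{X}_\tL}^\vee\,J_\tL$, which is then substituted into $\Sigma_\tR=J_\tR\Sigma_\tR^+J_\tR^\top$ exactly as the paper does.

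The only methodological difference lies in how that intertwining identity is established. The paper proves it by a direct elementwise computation: it writes $J_\tR[\varepsilon]=\ddz{s}\logGv(\expG(s\varepsilon^\wedge)\expG(-\mu_\tR^+))$, substitutes $\mu_\tR^+=\Ad_{\hat{X}_\tL^-}\mu_\tL^+$, and manipulates the expression through conjugations until the factors $\Ad_{\hat{X}_\tL}^\vee$ and $\Ad_{(\hat{X}_\tL^-)^{-1}}^\vee$ emerge. Your route is more structural: you recognise $J_\tL$ and $J_\tR$ as differentials of the chart transitions $\psi_\tL=(\Phi^\tL_{\hat{X}_\tL})^{-1}\circ\Phi^\tL_X$ and $\psi_\tR=(\Phi^\tR_{\hat{X}_\tR})^{-1}\circ\Phi^\tR_X$, observe the adjoint intertwining $(\Phi^\tR_x)^{-1}\circ\Phi^\tL_x=\Ad_x^\vee$ at both the old and new reference points, and obtain $\psi_\tR\circ\Ad_X^\vee=\Ad_{\hat{X}_\tL}^\vee\circ\psi_\tL$ by composition before differentiating. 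This buys you a cleaner explanation of \emph{why} the identity holds (it is the chain rule applied to a commuting square of charts) at the cost of slightly more setup; the paper's calculation is more self-contained but less illuminating. Both arguments are equivalent in content and rigour.
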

\begin{proof}
Consider the update of the reference states $\hat{X}_\tL^- = \hat{X}_\tL^+$ and $\hat{X}_\tR^- = \hat{X}_\tR^+$ since the update does not change the reference. 
Given \eqref{eq:left_right_correspondence_condition}, one has
\begin{align*}
    \hat{X}_\tR
    &= \expG(\mu_\tR^+) \hat{X}_\tR^-\\
    &= \expG(\Ad_{\hat{X}_\tL^-} \mu_\tL^+) \hat{X}_\tL^-
    = \hat{X}_\tL^- \expG(\mu_\tL^+)
    = \hat{X}_\tL.
\end{align*}

To prove the equivalence of the covariance matrices, we first analyse the relation between the linearisation matrices $J_\tL$ \eqref{eq:JL} and $J_\tR$ \eqref{eq:JR}
For any $\varepsilon \in \R^m$, one has\small
\begin{align*}
    J_\tR [\varepsilon]
    &= \ddz{s} \logGv(\expG(s\varepsilon^\wedge)\exp(-\mu_\tR^+)) \\
    &= \ddz{s} \logGv(\expG(s\varepsilon^\wedge)\exp(-\Ad_{\hat{X}_\tL^-} \mu_\tL^+)) \\
    &= \ddz{s} \logGv(\expG(s\varepsilon^\wedge)\hat{X}_\tL^- \exp(-\mu_\tL^+) {\hat{X}_\tL^-}^{-1}) \\
    &= \ddz{s} \logGv(\hat{X}_\tL^- \expG(\mu_\tL^+) \exp(-\mu_\tL^+) \\
    &\hspace{2cm} \expG(\Ad_{{\hat{X}_\tL^-}^{-1}}s\varepsilon^\wedge)(\hat{X}_\tL^- \expG(\mu_\tL^+))^{-1}) \\
    &= \ddz{s} \Ad_{\hat{X}_\tL^- \expG(\mu_\tL^+)}^\vee\\
    &\hspace{2cm} \logGv(\expG(-\mu_\tL^+) \expG(\Ad_{{\hat{X}_\tL^-}^{-1}}s\varepsilon^\wedge)) \\
    &= \Ad_{\hat{X}_\tL}^\vee\, J_\tL\, \Ad_{{\hat{X}_\tL^-}^{-1}}^\vee \varepsilon,
\end{align*}\normalsize
and hence $J_\tR = \Ad_{\hat{X}_\tL} J_\tL \Ad_{{\hat{X}_\tL^-}^{-1}}$.
Therefore, the reset covariance matrices are related by
\begin{align*}
    \Sigma_\tR
    &= J_\tR\, \Sigma_\tR^+\, J_\tR^\top \\
    &= \Ad_{\hat{X}_\tL}^\vee J_\tL {\Ad_{{\hat{X}_\tL^-}^{-1}}^\vee} \Ad_{\hat{X}_\tL^-}^\vee \Sigma_\tL^+ {\Ad_{\hat{X}_\tL^-}^\vee}^\top \\
    &\hspace{3cm}(\Ad_{\hat{X}_\tL}^\vee J_\tL \Ad_{{\hat{X}_\tL^-}^{-1}}^\vee)^\top \\
    &= \Ad_{\hat{X}_\tL}^\vee J_\tL \Sigma_\tL^+ J_\tL^\top {\Ad_{\hat{X}_\tL}^\vee}^\top \\
    &= \Ad_{\hat{X}_\tL}^\vee \Sigma_\tL^+ {\Ad_{\hat{X}_\tL}^\vee}^\top.
\end{align*}
This completes the proof.
\end{proof}

\begin{remark}
    Note that the reset step is critical in the whole equivalence proof, as the equivalence of the prediction step \ref{lem:prediction_equivalence} relies on the filter states being zero-mean concentrated Gaussians at the start of the prediction step.
\end{remark}

\definecolor{blue_okabe}{HTML}{0072B2}
\definecolor{green_okabe}{HTML}{009E73}
\definecolor{vermillion}{HTML}{D55E00}
\begin{figure*}[t]
    \centering
    \includegraphics[width=\linewidth]{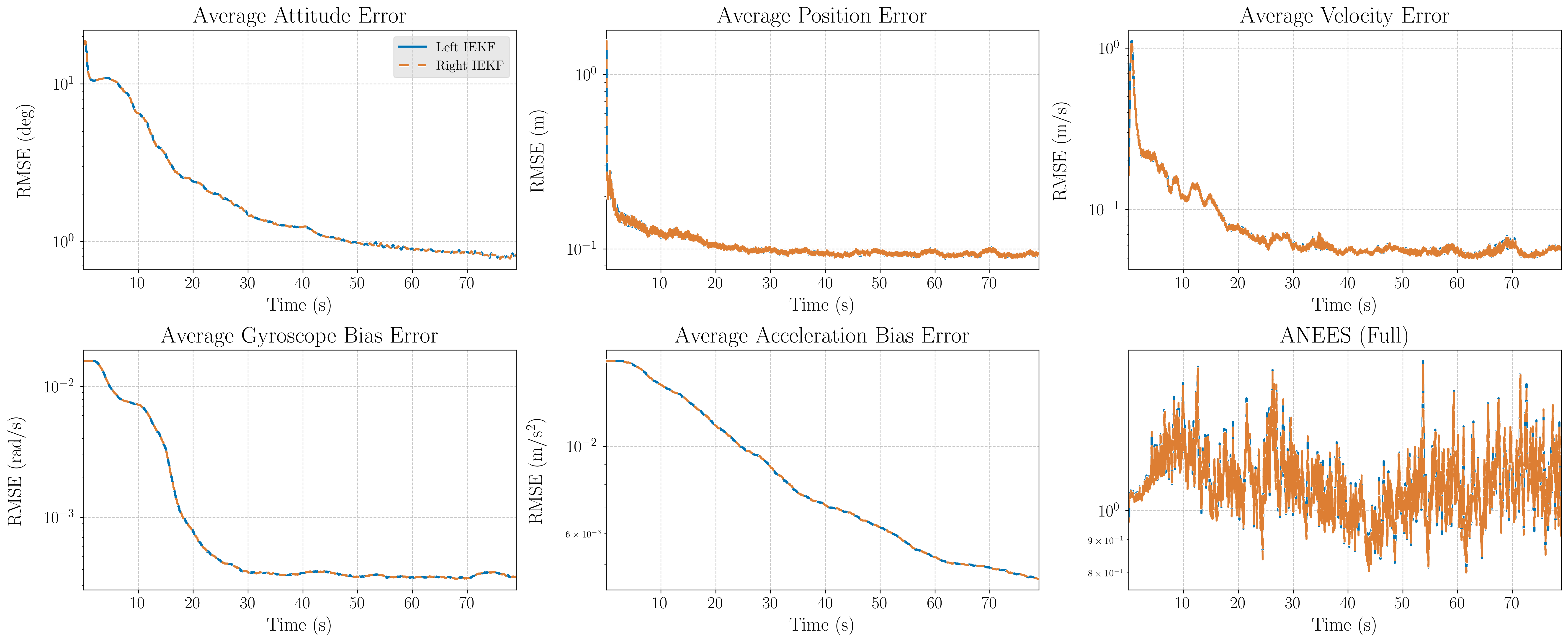}
    \caption{The RMSE of each state variable and ANEES of the L-IEKF (\textcolor{blue_okabe}{$\rule[0.5ex]{0.5cm}{1.0pt}$}) and R-IEKF (\textcolor{vermillion}{$\rule[0.5ex]{0.2cm}{1.0pt}\,\rule[0.5ex]{0.2cm}{1.0pt}$}) with reset. 
    Note that the two filters are indistinguishable in terms of performance at the current scale. This figure is used to demonstrate that the filters converge to the correct solution over time.
    }
    \label{fig:equivalence_test}
\end{figure*}

\section{Simulation Results}
\label{sec:experiment}

In this section, we test the L-IEKF and R-IEKF with reset outlined in Section~\ref{sec:IEKF_definition} on simulated data for the position-based INS problem.
We first present the primary results that demonstrate the equivalence of the L-IEKF and R-IEKF with reset in practice, and then we investigate the effect of discretisation on the equivalence.
Finally, we also present an ablation study on the effect of the reset step on the IEKF performance.

\subsection{Inertial Navigation Systems}
We will use the problem of GNSS-aided inertial navigation system (INS) as a motivating example for the paper. 
The INS problem is particularly of interest as it is a classical robotics system evolving on a Lie group, as well as a widely-studied application of the IEKF \cite{barrauGeometryNavigationProblems2023} and other symmetry-based filters \cite{fornasierEquivariantSymmetriesInertial2025}.
Consider a mobile robot equipped with an IMU that provides gyroscope and accelerometer measurements, as well as a GNSS receiver that measures the global position of the robot.
Under the non-rotating, flat Earth assumption, the robot's state can be represented by the position $\bm{p}\in\R^3$, the velocity $\bm{v}\in\R^3$, and the orientation $\mathbf{R}\in\SO(3)$, with the noise-free dynamics given by
\begin{subequations}
    \begin{align}
        & \dot{\mathbf{R}} 
        =\mathbf{R} \left(\bm{\omega}- \bm{b}_{\bm{\omega}}\right)^{\wedge}, \\
        &  \dot{\bm{v}}= \mathbf{R}\left( \bm{a}- \bm{b}_{\bm{a}}\right)+ \bm{g}, \\
        &  \dot{\bm{p}}= \bm{v}, \\
        &  \dot{\bm{b}}_{\bm{\omega}}= \bm{\tau}_{\bm{\omega}}, \\
        &  \dot{\bm{b}}_{\bm{a}}= \bm{\tau}_{\bm{a}} ,
    \end{align}
    \label{eq:INS_dynamics}
\end{subequations}
Here $\bm{\omega}\in\R^3$ and $\bm{a}\in\R^3$ are the angular velocity and proper acceleration expressed in the body frame. 
The gravity vector is denoted by $\bm{g}\in\R^3$ and is expressed in the world frame.
The rigid body orientation, position and velocity $(\mathbf{R},\bm{p},\bm{v})$ are termed the \emph{navigation states}, expressed in the reference or world frame, denoted by $(\mathbf{R},\bm{p},\bm{v})\in\SO(3)\times\R^3\times\R^3$.
The two biases $\bm{b}_{\bm{\omega}}\in\R^3$ and $\bm{b}_{\bm{a}}\in\R^3$ are termed the \emph{bias states}, expressed in the body frame, with dynamics modelled by unknown inputs $\bm{\tau}_{\bm{\omega}}$ and $\bm{\tau}_{\bm{a}}$ sampled from Gaussian distributions.

In the IEKF framework, the navigation states are usually represented using the extended pose group $\bm{F} = (\mathbf{R},\bm{p},\allowbreak \bm{v})\in\SE_2(3)$ (where $\bm{F}$ stands for a Galilean frame), and the bias states are represented as $\bm{b} = (\bm{b}_{\bm{\omega}}, \bm{b}_{\bm{a}})\in\R^6$.
The IEKF state space is then given by the product group $\grpG = \SE_2(3)\times\R^6$, and we write an element of the state space as
\[
X = (\bm{F},\bm{b}) = (\mathbf{R},\bm{p},\bm{v},\bm{b}_{\bm{\omega}},\bm{b}_{\bm{a}})\in\grpG.
\]
The GNSS receiver provides the global position measurement.
The noise-free measurement model is given by
\begin{align}
    h(X) = \bm{p} + \eta \in\R^3.
\end{align}
with $\eta \sim \GP(0,R)$.

\begin{figure*}[t]
    \centering
    \includegraphics[width=\linewidth]{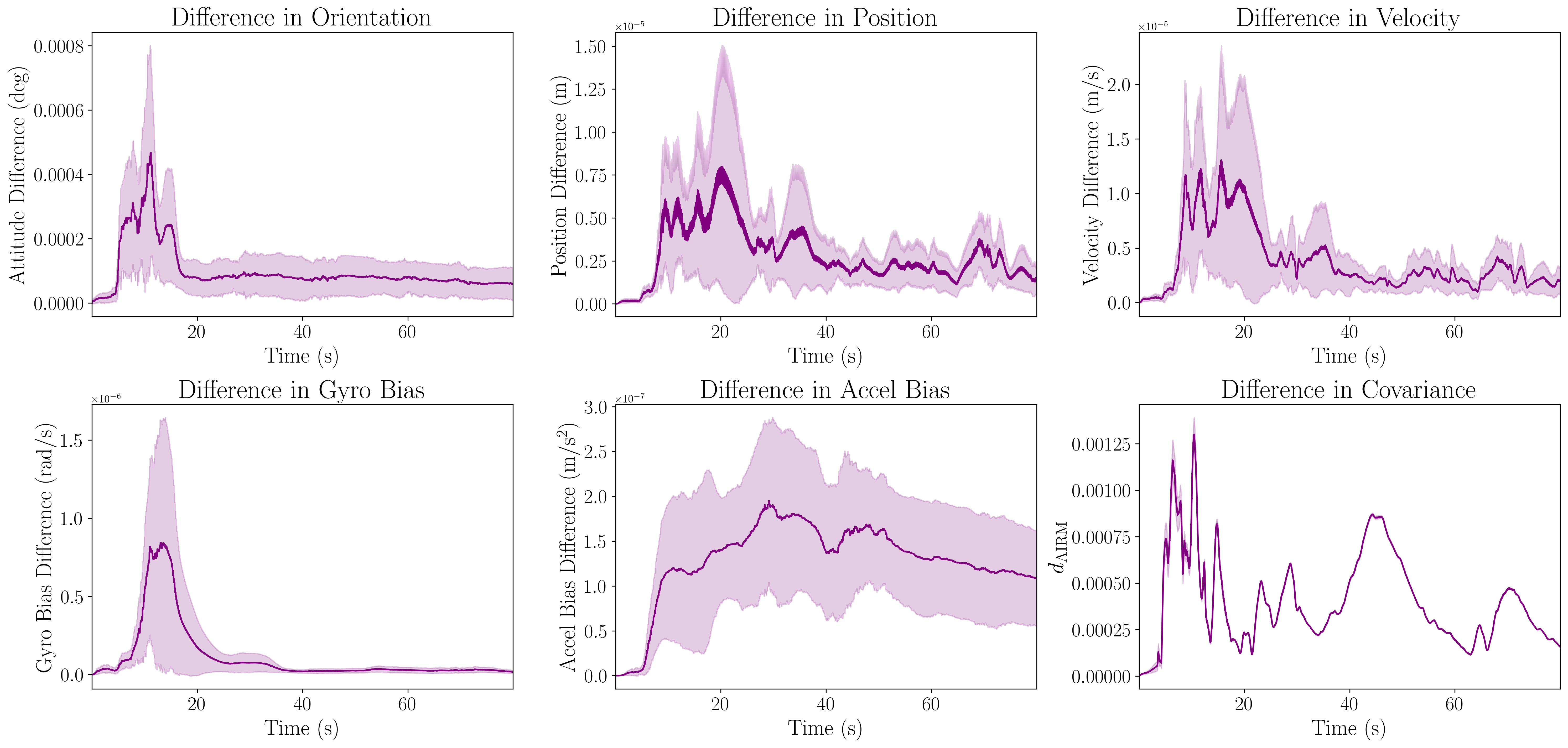}
    \caption{The difference in the estimates of each state variable and the covariance matrices between the L-IEKF and R-IEKF. The shaded area represents the standard deviation of the data across all trials.}
    \label{fig:equivalence_test_diff}
\end{figure*}

\subsection{Experimental Setup}
In this experiment, we conducted a Monte-Carlo simulation with 600 trials of a simulated UAV equipped with an IMU and a GNSS receiver.
We use the initial 80 seconds from six sequences of the Euroc dataset \cite{burriEuRoCMicroAerial2016} as the reference trajectories and generate 100 runs for each sequence.
Each run is initialised with a random state sampled from a Gaussian distribution with $20^\circ$ standard deviation per axis for the orientation and $1$m for the position.
The IMU biases are randomly generated every run following a Gaussian distribution with $0.1 \text{rad/s}\sqrt{s}$ for the gyroscope and $0.1 \text{m/s}^2\sqrt{s}$ for the accelerometer.
The global position measurements are corrupted by additive Gaussian noise with $0.2$m standard deviation per axis.
The UAV receives GNSS measurements at a rate of 10Hz, while the IMU measurements are sampled at 200Hz.
This experiment setup is the same as the one used in \cite{fornasierEquivariantSymmetriesInertial2025}.

The initial covariance matrices for both filters are set to be the initial Gaussian distribution which is used to sample the initial state, converted to a L-CGD or R-CGD representation on $\SE_2(3)$, that is, the initial states of two filters satisfy the equivalence condition (Def.~\ref{def:equivalent}).
In all comparisons, identical input data was provided to both filters. 

\begin{figure*}
    \centering
    \includegraphics[width=\linewidth]{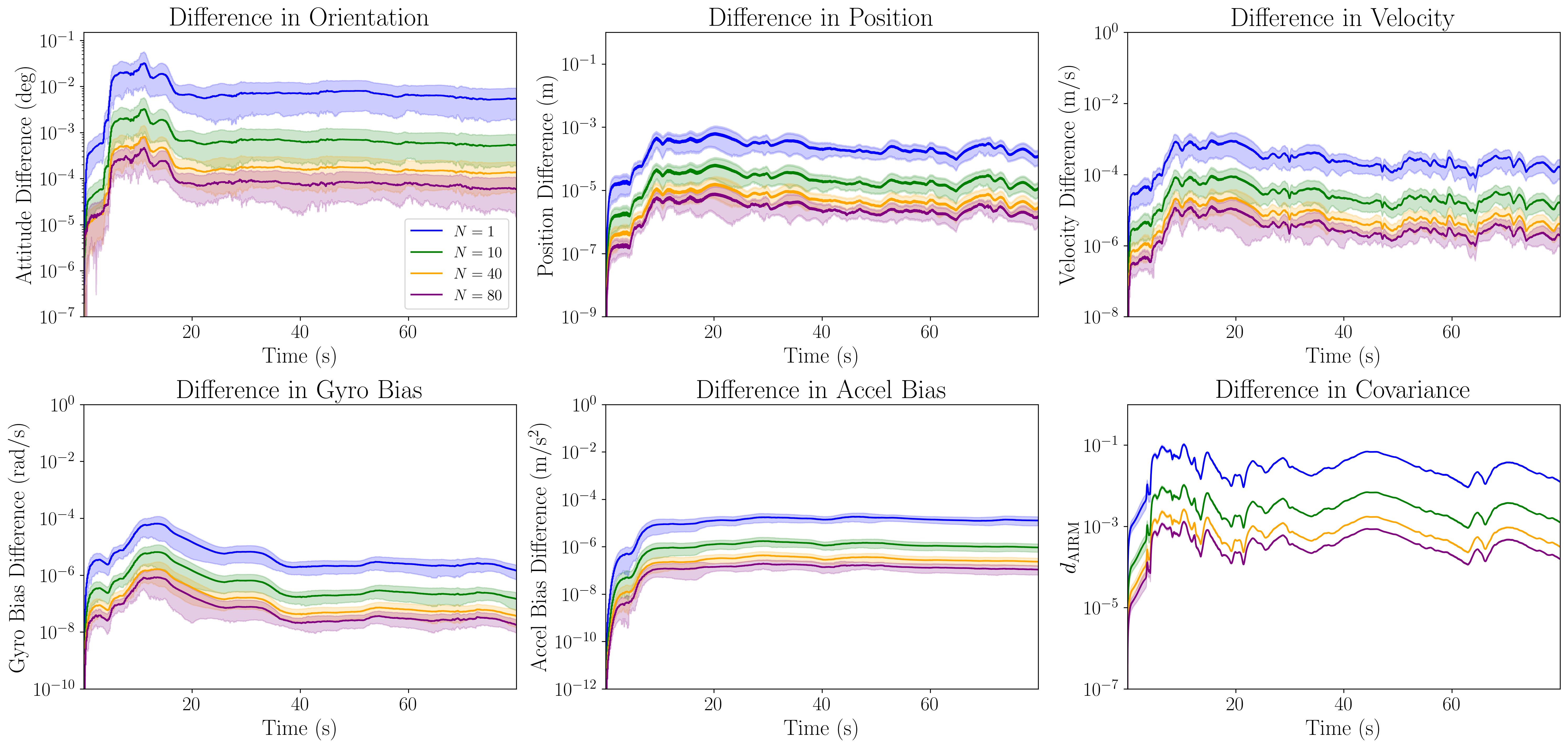}
    \caption{The difference between L-IEKF and R-IEKF with different $N$-substep Euler counts in the prediction step. 
    Shaded areas represent the standard deviation of the data across all trials.
    Log scale is used for better visibility of the difference.}
    \label{fig:frequency_test}
\end{figure*}

\subsection{Filter Equivalence}

In \ref{app:discrete_equivalence}, we show that for the natural discrete time system \eqref{eq:dynamicsDT} the algebraic expressions for the L-IEKF and R-IEKF with reset are identical. 
We do not simulate results for this case as the updates are equal up to machine precision. 

In practice, a hybrid filter is implemented by integrating the continuous-time filter propagation ODEs with the IMU inputs until the GPS measurement is received, and then applying the update step with the GPS measurement.
As the result, it is necessary to numerically approximate the solution of the continuous-time system.
Unfortunately, the standard integration schemes (e.g., Euler or Runge-Kutta) are not equivalent between the L-IEKF and R-IEKF.
In continuous-time, the L-IEKF and R-IEKF are related by a time-varying change of coordinates through $\Ad_{\hat{X}(t)}$.
Numerical integration of a time-varying system is not invariant to time-varying changes of coordinates, and this introduces a discrepancy between the two solutions.

For the scenario considered, the IMU provides 20 samples for every GPS sample received.
In practice, even integrating at the IMU rate, the numerical approximation introduces noticeable (if small) variation between the left and right variants of the filter.
To characterise and understand the discretisation error, we implement a sub-stepping scheme along with the standard explicit Euler integration in the prediction step: the continuous-time filter dynamics between two IMU measurement instants are integrated at a higher internal rate with zero-order hold inputs.
Concretely, we subdivide the IMU sample interval into $N$ equal sub-steps and apply the IEKF prediction operation $N$ times while keeping the inertial input constant over each sub-step.
Note that this sub-stepping scheme is purely a numerical integration refinement that captures the time varying nature of the equivalence relationship $\Ad_{\hat{X}(t)}$ along the trajectory. 
As $N$ goes to infinity, the sub-stepping solution converges to the true continuous time solution of the sampled data system with zero-order hold for the inputs and provides a framework to evaluate the contributions of the paper.

Fig.~\ref{fig:equivalence_test} provides results for the two filter implementations with a subsample rate of $N = 80$, that is 80 steps for every IMU measurement. 
The RMSE values for the navigation states and bias states, as well as the average normalised estimation error squared (ANEES) \cite{liEvaluationEstimationAlgorithms2012} are computed.
Both filters converge smoothly to the true state as shown in Fig.~\ref{fig:equivalence_test}, with the ANEES values close to 1, indicating that the filters are well-calibrated.
It is clear that the RMSE values of the navigation states and bias states for the L-IEKF and R-IEKF with reset are nearly identical.

In Fig.~\ref{fig:equivalence_test_diff}, we plot the difference in the state estimates of the L-IEKF and R-IEKF with reset for each state variable.
The difference of the covariance matrices is computed using the affine-invariant Riemannian distance (AIRM) of symmetric positive-definite matrices \cite{bhatiaPositiveDefiniteMatrices2007} after applying the adjoint map, given by
\[
d_{\text{AIRM}}(\Sigma_\tL, \Sigma_\tR') = \| \log(\Sigma_\tL^{-1/2} \Sigma_\tR' \Sigma_\tL^{-1/2}) \|_F,
\]
where $\Sigma_\tR' = \Ad_{\hat{x}_\tR^{-1}}^\vee \Sigma_\tR {\Ad_{\hat{x}_\tR^{-1}}^\vee}^\top$.
The value of AIRM represents, in percentage terms, how much the standard deviation of two covariance error-ellipsoids differs along each principal axis.

As shown in Fig.~\ref{fig:equivalence_test_diff}, a minor discrepancy in the rotation estimate is observed initially (approximately $0.0003\,\mathrm{deg}$), which decreases and remains around $0.0001\,\mathrm{deg}$ throughout the remainder of the trajectory.
The rest of the state variables, including position, velocity, and biases, show no significant difference between the two filters.
The AIRM value of the covariance matrices remains at around $0.0004$ throughout the trajectory, which corresponds to $0.04\%$ difference between the standard deviations of the two covariance matrices along each principal axis.

These results demonstrate the equivalence of the L-IEKF and R-IEKF with reset in practice.

\begin{figure*}[ht]
    \centering
    \includegraphics[width=\linewidth]{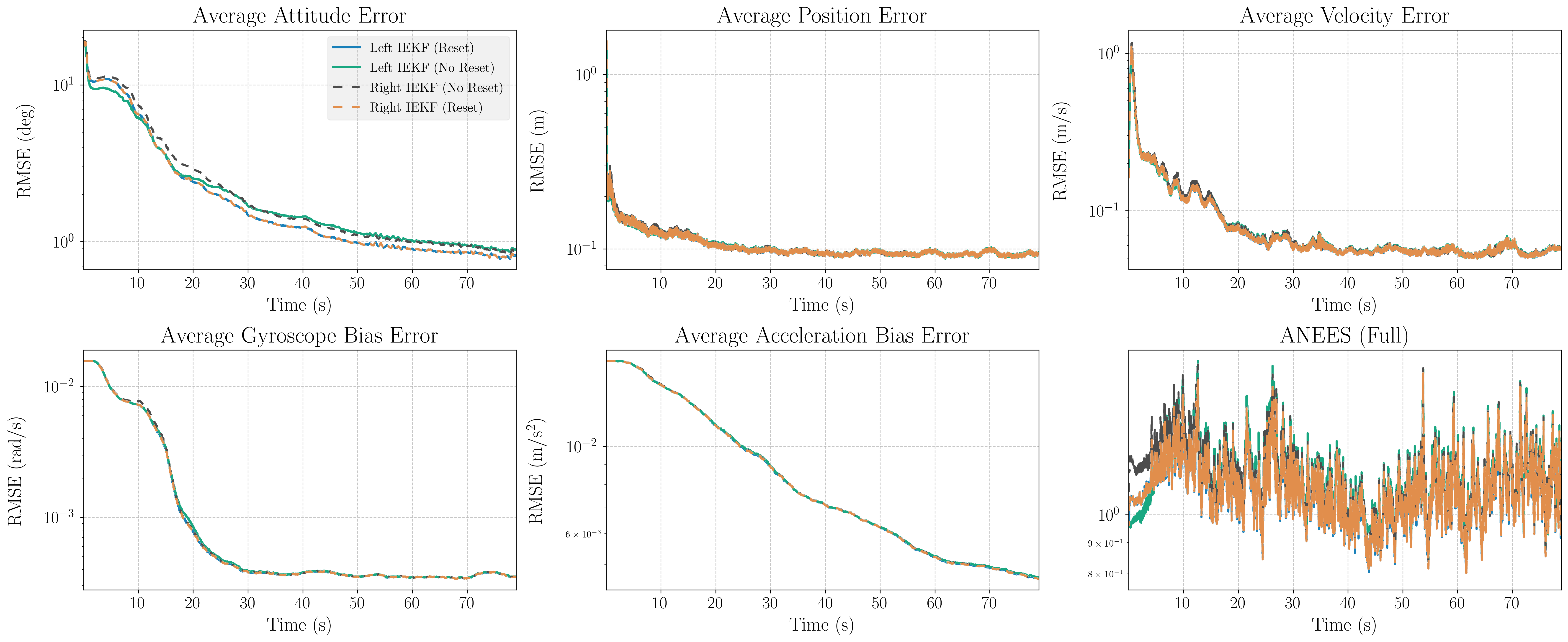}
    \caption{The RMSE and ANEES of the L-IEKF with reset (\textcolor{blue_okabe}{$\rule[0.5ex]{0.5cm}{1.0pt}$}) and without reset (\textcolor{green_okabe}{$\rule[0.5ex]{0.5cm}{1.0pt}$}), R-IEKF with reset (\textcolor{vermillion}{$\rule[0.5ex]{0.2cm}{1.0pt}\,\rule[0.5ex]{0.2cm}{1.0pt}$}) and without reset (\textcolor{black}{$\rule[0.5ex]{0.2cm}{1.0pt}\,\rule[0.5ex]{0.2cm}{1.0pt}$}).}
    \label{fig:reset_step_ablation}
\end{figure*}

\subsection{Discretisation Analysis}

In this section, we study the error introduced by the discretisation of the IEKF filter equation. 
In this experiment, we implement the L-IEKF and R-IEKF with different sub-stepping rates $N$ in the prediction step, differing from 1 to 80.
The results are shown in Fig.~\ref{fig:frequency_test}, with a zoomed-in view of the rotation difference in the inset to highlight the minor differences. 
Note that for $N =1$ we are still propagating for every IMU measurement and only applying the reset step (every 20 samples) when a GPS measurement is available.

It can be seen that when performing an explicit Euler integration with $N=1$, the average difference in the rotation estimate is at the scale of $0.01\,\mathrm{deg}$. 
As the sub-stepping rate increases, the differences in the state estimates and covariance matrices between the L-IEKF and R-IEKF decrease almost linearly.
Increasing the substep counts makes the discretised solution close to the true continuous solution and brings the trajectories together.

We claim that these results demonstrate that while discretisation introduces minor numerical differences between the implementations, these differences decrease with higher sampling frequencies and do not reflect any fundamental distinction between the filters.

\subsection{Reset Step Ablation Study}
\label{sec:reset_experiment}
The reset step is a key component in the equivalence proof. 
The original papers on the IEKF \cite{barrauInvariantExtendedKalman2017} and TFG-IEKF \cite{barrauGeometryNavigationProblems2023} do not include the reset step, relying instead on the underlying parallelism of a Lie group for the reset. 
Without the reset step, then the Left- and Right- variants of the IEKF are certainly not equivalent, and it is of interest to consider what difference in performance is present in this case. 
It is also of interest to ask whether the reset step improves the performance of the filter in general. 
In this section, we conduct an ablation study to investigate the effect of the reset step on the performance of the IEKF.

We implement the L-IEKF and R-IEKF with and without the reset step. 
Results are shown in Fig.~\ref{fig:reset_step_ablation}.
Broadly, the filters have similar responses and asymptotic performance, however, there are several important differences in the details. 
Consider the top left plot Fig.~\ref{fig:reset_step_ablation} showing rotational error. 
Here, the L-IEKF (the filter that matches the error symmetry to the measurement symmetry as is often recommended in the literature) demonstrates a significant improvement in performance during the transient (the first 15s of the response). 
The two algorithms with reset have identical responses (as expected) while the Right-IEKF (that opposes the error and measurement symmetry) performs worse. 
These results have been consistently replicated for a range of different scenarios and data and the authors believe that this reflects a substantive behaviour, although we only have empirical evidence for this. 
After the initial transient period, however, the algorithms that implement the reset step benefit from the improved stochastic properties of the modified covariance update. 
This effect has been noted in prior studies \cite{muellerCovarianceCorrectionStep2017, markleyErrorCovarianceResetMultiplicative2023} as well.
The observations that are most striking in the attitude are replicated to a lesser degree across the other state variables, including position and velocity. 
Similar trends are observed in the ANEES values.

The authors hypothesise that the initial performance advantage for the L-IEKF in the transient is related to the fact that the measurement noise in GPS lies in a linear space $\R^3$ and the Bayesian fusion step of the EKF is globally exact on Euclidean space when the error and measurement symmetries are matched. 
The reset step introduces coupling and nonlinearities into this update step due to the remapping of the covariance.  
We believe that during the transient when the error is very large, the benefit of the globally exact fusion outweighs the benefit of the stochastic correction of the reset. 
However, during the asymptotic phase, the Bayesian fusion in local coordinates does not incur the same performance loss and the benefit of the reset step dominates. 
We have not obtained empirical or analytic evidence to support this hypothesis at this time. 

\begin{remark}
The INS example chosen only considers a left-invariant measurement model, while the equivalence of the L-IEKF and R-IEKF with reset holds more generally. 
The specific performance characteristics of filters may vary depending on the measurement model, for example the navigation problem with body-frame measurements can lead to unobservability issues.
The implication of the reset step in such scenarios is an interesting topic for future research.
\end{remark}

\section{Conclusion}
\label{sec:conclusion}
In this paper, we have demonstrated the mathematical and practical equivalence of the left-Invariant extended Kalman filter (L-IEKF) and the right-Invariant extended Kalman filter (R-IEKF) with reset.
Our theoretical analysis proves that when properly initialised and implemented with the correct reset step, the two filters yield identical state estimates and equivalent probability distributions regardless of the choice of handedness.
The experimental results using a GNSS-aided inertial navigation system confirm the theoretical findings.
Furthermore, we show that for the INS problem, an L-IEKF without reset may well outperform the IEKF implemented with reset step during the transient, but not during the asymptotic phase of the filter. 
This effect may well underlie the community's understanding that matching the filter symmetry to the measurement is important, however, we recommend that the filter with reset is used as the default and then the choice of handedness is moot.

\appendix
\section{Equivalence of the L-IEKF and R-IEKF for discrete-time systems}\label{app:discrete_equivalence}

In the main result (Sec.~\ref{sub:equivalence}), we proved that the prediction step of the L-IEKF and R-IEKF yields equivalent estimates for hybrid systems.
However, the equivalence no longer holds when the continuous-time filter dynamics are discretised and implemented on a digital computer.
In this section, we show that given a discrete-time system, the discrete L-IEKF and R-IEKF with reset are equivalent as well.
We will only consider the prediction step of the L-IEKF and R-IEKF, as the update and reset steps remain identical in the discrete-time case.

Consider a noise-free discrete-time system on a Lie group $\grpG$ given by
\begin{align*}
    X_{k+1} = F_v(X_k) = X_k \discreteLift{X_k, v_k},
\end{align*}
where $X_k\in\grpG$ is the true state at time step $k$, $F_v(X_k)$ is the discrete-time flow induced by the velocity $v_k\in\vecL$.
One may define the system evolution function with left trivialisation by $\discreteLift{X_k, v_k} = X_{k+1} X_k^{-1}\in\grpG$.
We make the assumption that the process noise is left-invariant and enters the system as a Gaussian process on the Lie algebra $\gothg$, i.e.
\begin{align}
    X_{k+1} = X_k \discreteLift{X_k, v_k} \expG(w_k^\wedge), \quad w_k \sim \GP(0, Q_k)
\label{eq:dynamicsDT}
\end{align}
where $Q_k$ is a positive-definite covariance matrix.
This follows the definition of the general dynamical system provided in \cite{barrauInvariantKalmanFiltering2018}.
To enhance readability and avoid excessive use of subscripts, we will drop the subscript $k, k+1$ and use the notation $X, X^-$ to align with the notation used in the main text.

\begin{remark} 
If one considers a pure left-invariant continuous-time system with noise \[\td {X} = X (\Lambda(v)\dt+\Upsilon[Q^{\frac{1}{2}}dw])\] then if the velocity $v = v_k$ is constant on short time intervals, the model can be integrated into an exact noisy discrete-time model on the Lie-group 
\begin{align}\label{eq:system_dt}    
X_{k+1} = X_{k} \expG(\Lambda(v_k) \delta t + w_k^\wedge), \quad w_k \sim \GP(0, \delta t \Upsilon Q \Upsilon^\top)
\end{align}
where the Gaussian process lives on the Lie-algebra $\gothg$. 
This construction differs from the general discrete-time model \eqref{eq:dynamicsDT} by the higher-order terms in the Baker–Campbell– Hausdorff (BCH) expansion. 
One may choose to keep the first commutator from the BCH expansion to obtain second-order accuracy in the noise model, however, the modelling choice does not affect the equivalence proof. 
\end{remark}

Given the discrete-time system \eqref{eq:dynamicsDT} and the pre-observer system 
\[
\hat{X}^- = \hat{X} \discreteLift{\hat{X}, v},
\]
the left-invariant error and its dynamics are given by
\begin{align*}
    E_{\tL}^- &= {\hat{X}^{-}_\tL}^{-1} X^-_\tL \\
    &= \discreteLift{\hat{X}_\tL, v}^{-1} E_{\tL} \discreteLift{\hat{X}_\tL E_{\tL}, v}\expG(w^\wedge).
\end{align*}
Similarly, one can define the right-invariant error and its dynamics
\begin{align*}
    E_{\tR}^- &= X^-_\tR {\hat{X}^{-}_\tR}^{-1} \\
    &= E_{\tR} \hat{X}_\tR \discreteLift{E_{\tR} \hat{X}_\tR, v}\expG(w^\wedge)\discreteLift{\hat{X}_\tR, v}^{-1}\hat{X}_\tR^{-1} .
\end{align*}

\begin{definition}[L-IEKF discrete-time prediction step]
    Let $(\mu_\tL, \hat{X}_\tL, \Sigma_\tL)$ denote the L-IEKF state with $\mu_\tL =0$ at time $t_k$.
    The discrete-time prediction step of the L-IEKF is given by
    \begin{align*}
        \mu_\tL^- &= 0,\\
        \hat{X}_\tL^- &= \hat{X}_\tL \discreteLift{\hat{X}_\tL, v}, \\
        \Sigma_\tL^- &= \mathbf{A}_\tL \Sigma_\tL \mathbf{A}_\tL^\top + \mathbf{B}_\tL Q \mathbf{B}_\tL^\top, 
    \end{align*}
    where $\mathbf{A}_\tL$ and $\mathbf{B}_\tL$ are the linearisation matrices defined by 
    \begin{align*}
        \mathbf{A}_\tL &:= \tD \tL_{{\discreteLift{\hat{X}_\tL, v}}^{-1}} \cdot \tD_X|_{\hat{X}_\tL} \discreteLift{X, v} \cdot \tD \tL_{\hat{X}_\tL} \\
    &\hspace{5cm}+ \Ad^\vee_{{\discreteLift{\hat{X}_\tL, v}}^{-1}}, \\
    \mathbf{B}_\tL &:= I. 
    \end{align*}
    
\end{definition}

\begin{definition}
    [R-IEKF discrete-time prediction step]
    Let $(\mu_\tR, \hat{X}_\tR, \Sigma_\tR)$ denote the R-IEKF state with $\mu_\tR =0$ at time $t_k$.
    The discrete-time prediction step of the R-IEKF is given by
    \begin{align*}
        \mu_\tR^- &= 0, \\
        \hat{X}_\tR^- &= \hat{X}_\tR \discreteLift{\hat{X}_\tR, v}, \\
        \Sigma_\tR^- &= \mathbf{A}_\tR \Sigma_\tR \mathbf{A}_\tR^\top + \mathbf{B}_\tR Q \mathbf{B}_\tR^\top,
    \end{align*}
    where $\mathbf{A}_\tR$ and $\mathbf{B}_\tR$ are the linearisation matrices defined by 
    \begin{align*}
        \mathbf{A}_\tR &:= \Ad^\vee_{\hat{X}_\tR} \cdot \tD \tR_{\discreteLift{\hat{X}_\tR, v}} \cdot \tD_X|_{\hat{X}_\tR} \discreteLift{X, v} \cdot \tD \tR_{\hat{X}_\tR} - I ,\\
        \mathbf{B}_\tR &:= \Ad^\vee_{\hat{X}_\tR^-}.
    \end{align*}
\end{definition}

\begin{lemma}(Discrete-time prediction step equivalence)
    The discrete-time prediction steps of the L-IEKF and R-IEKF are equivalent, i.e. 
    Let $(\mu_\tL, \hat{X}_\tL, \Sigma_\tL)$ and $(\mu_\tR, \hat{X}_\tR, \Sigma_\tR)$ be the states of the L-IEKF and R-IEKF, respectively, with $\mu_\tL = 0$ and $\mu_\tR = 0$.
    Suppose that the L-IEKF and R-IEKF states are equivalent (Def. \ref{def:equivalent}) before the prediction step, then the states after the prediction step remain equivalent.
\end{lemma}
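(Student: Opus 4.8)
The plan is to mirror the continuous-time argument of Lemma~\ref{lem:prediction_equivalence}, reducing the whole statement to a single intertwining identity between the discrete linearisation matrices. The offset and reference equivalences are immediate: $\mu_\tL^- = \mu_\tR^- = 0$ by definition, and since $\hat{X}_\tL = \hat{X}_\tR$ with both filters propagating the reference by the \emph{same} lift $\discreteLift{\cdot,v}$, one has $\hat{X}_\tL^- = \hat{X}_\tL\discreteLift{\hat{X}_\tL,v} = \hat{X}_\tR\discreteLift{\hat{X}_\tR,v} = \hat{X}_\tR^-$. Writing $\hat{X}^- := \hat{X}_\tL^- = \hat{X}_\tR^-$ and $\Phi := \discreteLift{\hat{X}_\tL,v}$, so that $\hat{X}^- = \hat{X}_\tL\Phi$, it only remains to establish the covariance correspondence $\Sigma_\tR^- = \Ad_{\hat{X}^-}^\vee \Sigma_\tL^- {\Ad_{\hat{X}^-}^\vee}^\top$ from the hypothesis $\Sigma_\tR = \Ad_{\hat{X}_\tL}^\vee \Sigma_\tL {\Ad_{\hat{X}_\tL}^\vee}^\top$.

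First I would dispatch the noise term: since $\mathbf{B}_\tL = I$ and $\mathbf{B}_\tR = \Ad_{\hat{X}^-}^\vee$, one has $\mathbf{B}_\tR Q \mathbf{B}_\tR^\top = \Ad_{\hat{X}^-}^\vee (\mathbf{B}_\tL Q \mathbf{B}_\tL^\top){\Ad_{\hat{X}^-}^\vee}^\top$, so the noise contribution already carries the required conjugation. The entire lemma then collapses onto the single intertwining relation
\[
\mathbf{A}_\tR\,\Ad_{\hat{X}_\tL}^\vee \;=\; \Ad_{\hat{X}^-}^\vee\,\mathbf{A}_\tL .
\]
Indeed, substituting the hypothesis into $\Sigma_\tR^- = \mathbf{A}_\tR\Sigma_\tR\mathbf{A}_\tR^\top + \mathbf{B}_\tR Q\mathbf{B}_\tR^\top$ and applying this relation factors $\Ad_{\hat{X}^-}^\vee$ out of both the propagated and the noise term, leaving $\Ad_{\hat{X}^-}^\vee(\mathbf{A}_\tL\Sigma_\tL\mathbf{A}_\tL^\top + \mathbf{B}_\tL Q\mathbf{B}_\tL^\top){\Ad_{\hat{X}^-}^\vee}^\top = \Ad_{\hat{X}^-}^\vee\Sigma_\tL^-{\Ad_{\hat{X}^-}^\vee}^\top$, which is exactly the equivalence condition at the predicted state. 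This is the discrete analogue of the $P$-dynamics computation in Lemma~\ref{lem:prediction_equivalence}, with conjugation by a fixed Adjoint replacing the differential equation for $P$.

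The main obstacle is verifying the intertwining relation itself, where the left/right handedness of the trivialisations must be reconciled. The key tools are the interchange identity $\tD\tL_{X} = \tD\tR_{X}\,\Ad_{X}$ between the two trivialisations, and the homomorphism property $\Ad_{AB}^\vee = \Ad_A^\vee\Ad_B^\vee$ applied to the factorisation $\hat{X}^- = \hat{X}_\tL\Phi$. Concretely, I would isolate the common Jacobian core $C_\Phi := \tD\tL_{\Phi^{-1}}\cdot\tD_X|_{\hat{X}_\tL}\discreteLift{X,v}$ shared (after converting every right translation to a left translation) by both matrices, so that $\mathbf{A}_\tL = C_\Phi\,\tD\tL_{\hat{X}_\tL} + \Ad_{\Phi^{-1}}^\vee$ and $\mathbf{A}_\tR$ acquires a core of the form $\Ad_{\hat{X}^-}^\vee C_\Phi\,\tD\tR_{\hat{X}_\tL}$. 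Right-multiplying $\mathbf{A}_\tR$ by $\Ad_{\hat{X}_\tL}^\vee$ and using $\tD\tR_{\hat{X}_\tL}\Ad_{\hat{X}_\tL} = \tD\tL_{\hat{X}_\tL}$ turns its core into $\Ad_{\hat{X}^-}^\vee C_\Phi\,\tD\tL_{\hat{X}_\tL}$, matching the core of $\Ad_{\hat{X}^-}^\vee\mathbf{A}_\tL$, while the remaining constant offset terms collapse via $\Ad_{\hat{X}^-}^\vee\Ad_{\Phi^{-1}}^\vee = \Ad_{\hat{X}_\tL\Phi\Phi^{-1}}^\vee = \Ad_{\hat{X}_\tL}^\vee$. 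The delicate bookkeeping is tracking which base point each differential of translation is evaluated at (at $\hat{X}_\tL$, at $\Phi$, and at $\hat{X}^-$) and checking that the constant offset terms in $\mathbf{A}_\tL$ and $\mathbf{A}_\tR$ cancel exactly under $\hat{X}^- = \hat{X}_\tL\Phi$; once the trivialisation conversions are set up correctly, the remainder is a direct algebraic verification.
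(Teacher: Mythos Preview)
Your proposal is correct and follows essentially the same route as the paper. The paper reduces the covariance equivalence to the two conditions $\mathbf{A}_\tR = \Ad^\vee_{\hat{X}_\tL^-}\mathbf{A}_\tL(\Ad^\vee_{\hat{X}_\tL})^{-1}$ and $\mathbf{B}_\tR = \Ad^\vee_{\hat{X}_\tL^-}\mathbf{B}_\tL$, which is exactly your intertwining relation rewritten, and then verifies the $\mathbf{A}$-identity by the same left/right trivialisation conversion and Adjoint homomorphism you describe; your ``core'' $C_\Phi$ bookkeeping is only a cosmetic repackaging of the paper's direct computation.
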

\begin{proof}
    It is straightforward to see that the mean $\mu$ and the reference state $\hat{X}$ remain equivalent after the prediction step.
    To prove the equivalence of the covariance matrices, it suffices to show that 
    \begin{align}
        \mathbf{A}_\tR &= \Ad^\vee_{\hat{X}_\tL^-} \mathbf{A}_\tL {\Ad^\vee_{\hat{X}_\tL}}^{-1}, \label{eq:AR_AL_equivalence_DT}\\
        \mathbf{B}_\tR &= \Ad^\vee_{\hat{X}_\tL^-} \mathbf{B}_\tL. \label{eq:BR_BL_equivalence_DT}
    \end{align}
    To prove \eqref{eq:AR_AL_equivalence_DT}, one has
    \begin{align*}
        &\Ad^\vee_{\hat{X}_\tL^-} \mathbf{A}_\tL {\Ad^\vee_{\hat{X}_\tL}}^{-1}\\
        &=\Ad^\vee_{\hat{X}_\tL^-} \tD\tL_{{\discreteLift{\hat{X}_\tL, v}}^{-1}} \tD_X|_{\hat{X}_\tL} \discreteLift{X, v} \tD \tL_{\hat{X}_\tL}{\Ad^\vee_{\hat{X}_\tL}}^{-1} \\
        &\hspace{3.5cm}+ \Ad^\vee_{\hat{X}_\tL^-} \Ad^\vee_{{\discreteLift{\hat{X}_\tL, v}}^{-1}} {\Ad^\vee_{\hat{X}_\tL}}^{-1} \\
        &=\tD\tL_{\hat{X}_\tL^-}\tD\tR_{\hat{X}_\tL^-}\tD_X|_{\hat{X}_\tL} \discreteLift{X, v} \tD\tL_{\hat{X}_\tL} \tD\tR_{\hat{X}_\tL} \tD\tR_{\hat{X}_\tL} \\
        &\hspace{3.5cm}+ \Ad^\vee_{\hat{X}_\tL^-} \Ad^\vee_{{\discreteLift{\hat{X}_\tL, v}}^{-1}} {\Ad^\vee_{\hat{X}_\tL}}^{-1} \\
        &= \Ad_{\hat{X}_\tL}^\vee \tD\tR_{\discreteLift{\hat{X}_\tL, v}} \tD_X|_{\hat{X}_\tL} \discreteLift{X, v} \tD\tR_{\hat{X}_\tL} - I \\
        &= \mathbf{A}_\tR.
    \end{align*}
    The proof of condition \eqref{eq:BR_BL_equivalence_DT} follows directly from the definition of $\mathbf{B}_\tR$ and $\mathbf{B}_\tL$.
    This completes the proof of the lemma.
\end{proof}

\bibliographystyle{IEEEtran}
\bibliography{reference}

\end{document}